\newtheorem{proposition}{Proposition}[section]
\DeclareMathOperator{\g}{g}
\title{From Uncertainty to Precision:\\Enhancing Binary Classifier Performance\\through Calibration\thanks{The authors would like to thank Philipp Ratz for his valuable comments. Emmanuel Flachaire and Ewen Gallic acknowledge that the project leading to this publication has received funding from the French government under the ``France 2030'' investment plan managed by the French National Research Agency (reference: ANR-17-EURE-0020) and from Excellence Initiative of Aix-Marseille University -- A*MIDEX. Fran\c{c}ois Hu acknowledges that the project is funded by the Natural Sciences and Engineering Research Council of Canada (NSERC) Emerging Infectious Diseases Modelling Initiative (EIDM), awarded to the Mathematics for Public Health (MfPH) program.}}
\definecolor{bleu}{RGB}{0,101,189}
\definecolor{vert}{HTML}{004D40}
\definecolor{rose}{HTML}{D81B60}
\definecolor{bleuTOL}{HTML}{332288}
\definecolor{wongBlack}{RGB}{0,0,0}
\definecolor{wongGold}{RGB}{230, 159, 0}
\definecolor{wongLightBlue}{RGB}{86, 180, 233}
\definecolor{wongGreen}{RGB}{0, 158, 115}
\definecolor{wongYellow}{RGB}{240, 228, 66}
\definecolor{wongBlue}{RGB}{0, 114, 178}
\definecolor{wongOrange}{RGB}{213, 94, 0}
\definecolor{wongPurple}{RGB}{204, 121, 167}
\definecolor{colUncalibrated}{RGB}{191, 191, 191}
\definecolor{colRecalibrated}{RGB}{197, 214, 231}
\definecolor{bleuTOL}{HTML}{332288}
\definecolor{vertTOL}{HTML}{117733}
\definecolor{vertClairTOL}{HTML}{44AA99}
\definecolor{bleuClairTOL}{HTML}{88CCEE}
\definecolor{sableTOL}{HTML}{DDCC77}
\definecolor{parmeTOL}{HTML}{CC6677}
\definecolor{magentaTOL}{HTML}{AA4499}
\definecolor{roseTOL}{HTML}{882255}
\author[1]{Agathe~Fernandes~Machado\thanks{Corresponding author: \href{mailto:fernandes_machado.agathe@courrier.uqam.ca}{fernandes\_machado.agathe@courrier.uqam.ca}}}
\author[1]{Arthur~Charpentier}
\author[2]{Emmanuel~Flachaire}
\author[2]{Ewen~Gallic}
\author[3]{Fran\c cois Hu}
\affil[1]{%
    \footnotesize Département de Mathématiques\\
    Université du Québec à Montréal\\
    Montréal, Québec, Canada
}
\affil[2]{%
    \footnotesize Aix Marseille Univ, CNRS, AMSE\\
    Marseille, France
}
\affil[3]{%
    \footnotesize Université de Montréal\\
    Montréal, Québec, Canada
}
\def\@fnsymbol#1{%
   \ifcase#1\or
   \TextOrMath ~ \dagger\or
   \TextOrMath {\footnotesize\Letter} \dagger\or
   \TextOrMath \textdaggerdbl \ddagger \or
   \TextOrMath \textsection  \mathsection\or
   \TextOrMath \textparagraph \mathparagraph\or
   \TextOrMath \textbardbl \|\or
   \TextOrMath {\textdagger\textdagger}{\dagger\dagger}\or
   \TextOrMath {\textdaggerdbl\textdaggerdbl}{\ddagger\ddagger}\else
   \@ctrerr \fi
}
\newcommand{\authornames}{\footnotesize\textsc{Fernandes Machado, Charpentier, Flachaire, Gallic, Hu}}
\patchcmd{\NAT@test}{\else \NAT@nm}{\else \NAT@nmfmt{\NAT@nm}}{}{}
\DeclareRobustCommand\citepos
   \let\NAT@nmfmt\NAT@posfmt
\let\NAT@ctype\z@\NAT@partrue
\let\NAT@orig@nmfmt\NAT@nmfmt
\def\NAT@posfmt#1{\NAT@orig@nmfmt{#1's}}
\begin{document}

\maketitle
\thispagestyle{empty}

\begin{abstract}
  The assessment of binary classifier performance traditionally centers on discriminative ability using metrics, such as accuracy. However, these metrics often disregard the model's inherent uncertainty, especially when dealing with sensitive decision-making domains, such as finance or healthcare. Given that model-predicted scores are commonly seen as event probabilities, calibration is crucial for accurate interpretation. In our study, we analyze the sensitivity of various calibration measures to score distortions and introduce a refined metric, the Local Calibration Score. 
  Comparing recalibration methods, we advocate for local regressions, emphasizing their dual role as effective recalibration tools and facilitators of smoother visualizations. We apply these findings in a real-world scenario using Random Forest classifier and regressor to predict credit default while simultaneously measuring calibration during performance optimization.
\end{abstract}

\textbf{Keywords}: Calibration, Binary classification, Local regression

\section{Introduction}\label{sec:intro}

Binary classification tasks are prevalent in learning algorithms, as diverse scenarios require binary decisions. Examples include predicting default risk or accident occurrence in insurance or finance as well as disease likelihood in healthcare.
To improve reliability, particularly in sensitive decision-making contexts, a classifier must possess strong discriminatory capabilities. 
Typically, classifiers are trained to optimize goodness-of-fit criteria, often based on the accuracy of class predictions. However, goodness-of-fit criteria, such as accuracy or AUC, do not consider the varying confidence levels assigned by the algorithm to each prediction. If the sole objective is effective class prediction, then the classifier fulfills its purpose. Nevertheless, there are instances where interest extends beyond the predicted class to the associated likelihood.
This occurs when predicting loan repayment defaults \citep{PD-RF} or accident incidences, as risk transfer pricing is usually tied directly to event probabilities. In such cases, the model-predicted scores of classifiers are often interpreted as event probabilities. 
Yet, in these examples, achieving accurate interpretation as probabilities necessitates effective calibration of the model.
A well-calibrated model ensures precise understanding of the predicted scores as probabilities. For instance, if a model assigns a predicted probability of 80\% to events, the observed proportion of those events occurring over the long run --according to \citepos{dawid1982well} terminology-- should ideally align with the predicted value of 80\%.

Simple classifiers such as Logistic Regression models typically exhibit overall calibration \citep{Mildenhall1999ASR} due to their design in the empirical risk minimization problem, but evaluating their local calibration conditioned on predicted score values poses challenges \citep{kull2017EJS}. In addition, when using more opaque models, such as Random Forest (RF) or Neural Networks, the interpretability of calibration becomes more nuanced, with differing views on their potential (mis)calibration \citep{ICMLcalibration2005, pmlr-v70-guo17a, stackedRF2020, krishnan2020improving, minderer2021revisiting}. 
Consequently, various metrics and post-processing recalibration methods, including Platt scaling \citep{platt1999probabilistic}, isotonic regression \citep{zadrozny2002transforming}, and Beta calibration \citep{kull2017EJS}, have been proposed to measure and correct poor calibration in classification models, particularly in scenarios requiring confidence scores.

Considering the comprehensive studies on the concept of calibration, determining the most appropriate metric or recalibration method for a specific dataset and its trained algorithm is not straightforward. Various calibration metrics diverge significantly in their evaluation of models, and a consensus has yet to be established, even in binary tasks. In this regard, after presenting numerical and graphical tools to measure the calibration of a binary predictive model in Section~\ref{sec:calibration}, we propose simulating a synthetic dataset for which the true distribution of probabilities is known. By deliberately manipulating these probabilities, we create distorted versions to emulate uncalibrated scores that might be produced by an ML model. 
With access to the true distribution, we precisely measure the miscalibration of the distorted probabilities using the Mean Squared Error (MSE). Consequently, we can identify calibration metrics that closely resemble the MSE, and introduce a novel metric, the Local Calibration Score (LCS), which uses local regression techniques. Subsequently, in Section~\ref{sec:recalibration}, we present recalibration approaches to address poor calibration of distorted probabilities. Despite the intentional miscalibration of these scores, the traditional performance metrics have not deteriorated.

The analysis of synthetic data yields insights favoring the novel calibration metric LCS. Subsequently, we compute this measure in a real-world scenario discussed in Section~\ref{sec:expe-RF}, in which a Random Forest algorithm is employed to predict the risk of default. We train both an RF classifier and regressor on the dataset, with the latter demonstrating superior accuracy and calibration, contrasting the results reported by \cite{calibRF2008}. While selecting the RF algorithm and optimizing its hyperparameters using data from \cite{misc_default_of_credit_card_clients_350}, we recreate a scenario in which decision makers prioritize finely tuned goodness-of-fit metrics. During this procedure, caution is advised to avoid compromising calibration for the sake of discriminative capacity, particularly with the aim of using model predicted scores.

Our contributions can be summarized as
follows:
\begin{itemize}
    \item In the case of binary regression within our Data Generating Process (DGP), enabling exact calibration calculation, the Expected Calibration Error (ECE) does not emerge as the most robust calibration metric, as well as the Brier score.
    \item Based on visualization techniques that involve local polynomial regression for calibration curves, we introduce a novel calibration metric named LCS, the relevance of which is validated through the assessment of ground-truth miscalibration on synthetic data.
    \item When observing the progression of the novel calibration metric --LCS-- for different AUC levels during the optimization of both RF regressor and classifier algorithms, we highlight that integrating a calibration metric in the optimization process is significant if one intends to utilize the scores predicted by the classifier.
\end{itemize}

\section{Calibration}
\label{sec:calibration}

Consider a binary variable $D$ that takes the value 1 if an event occurs and 0 otherwise. In this context, the probability of the event depends on individual characteristics, \textit{i.e.}, $p_i = s(\mathbf{x}_i)$, where, with sample size $n > 0$, $i=1,\ldots, n$ represents individuals, and $\mathbf{x}_i$ the characteristics. The goal is to estimate this probability using a model, such as a Generalized Linear Model (GLM) or an ML model such as an RF. These models estimate a score $\hat{s}(\mathbf{x}_i) \in [0,1]$, allowing the classification of observations based on the estimated probability of the event. By setting a probability threshold $\tau$ in 
 $[0, 1]$, one can predict the class of each observation: 1 if the event occurs, and 0 otherwise. However, to interprete the score as a probability, it is crucial that the model is well-calibrated. For a binary variable $D$, a model is well-calibrated when \citep{definition-calibration}
\begin{equation}
\mathbb{P}(D = 1 \mid \hat{s}(\mathbf{x}) = p) = p, \quad \forall p \in [0,1]\enspace,\label{eq-calib}
\end{equation}
which is equivalent to:
\begin{equation}
\mathbb{E}[D \mid \hat{s}(\mathbf{x}) = p] = p, \quad \forall p \in [0,1]\enspace.\label{eq-calib-E}
\end{equation}
For example, if $D$ represents a credit default, with $D=1$ indicating default and $D=0$ indicating no default, a model predicting credit default is well-calibrated if the values of model's predicted probability $\hat{s}(\mathbf{x})$ closely match the actual observed probability of default. This means that if the model predicts a probability of 0.8 for a particular individual, the actual default rate for individuals with a predicted probability of 0.8 should be close to 0.8.

It should be mentioned that conditioning by $\{\hat{s}(\mathbf{x})=p\}$ leads to the concept of (local) calibration; however, as discussed by \cite{bai2021don}, $\{\hat{s}(\mathbf{x})=p\}$ is \textit{a.s.} a null mass event. Thus, calibration should be understood in the sense that 
$$
\mathbb{E}[D \mid \hat{s}(\mathbf{x}) = p]\overset{a.s.}{\to} p\text{ when }n\to\infty\enspace,
$$
\sloppy meaning that, asymptotically, the model is well-calibrated, or locally well-calibrated in $p$, for any $p$. From the dominated convergence theorem, it also signifies that, ``on average,'' the model is well-calibrated. To account for this, we will consider multiple replications of finite samples in the simulation study in Section~\ref{subsec:poor-calibration}.

\subsection{Measuring Calibration}
\label{sec:measure-calibration}

To assess the calibration of a predictive model, the literature provides both metrics and visual approaches for evaluating $\hat{s}$. Given the continuous nature of the score with a null mass event, various methods have been proposed to detect (\mbox{mis-})calibration. We will briefly introduce the most popular measures before presenting a new calibration metric (refer to the LCS in Section~\ref{subsubsec:locfitcurve}).

\subsubsection{Quantile based measures}

\paragraph{Calibration curve}

In the binary case, based on Equation~\ref{eq-calib-E}, constructing a calibration curve to visualize the calibration of a model involves estimating the function $\g(\cdot)$ that measures miscalibration on its predicted scores $\hat{s}(\mathbf{x})$:
\begin{equation}
\g :
\begin{cases}
[0,1] \rightarrow [0,1]\\
p \mapsto \g(p) := \mathbb{E}[D \mid \hat{s}(\mathbf{x}) = p]
\end{cases}\enspace.
\end{equation}
The $\g$ function for a well-calibrated model is the identity function $\g(p) = p$. In practice, from this real-valued setting, it is challenging to have a sufficient number of observations in the training dataset with identical scores to effectively measure calibration defined in Equation~\ref{eq-calib}, resulting in a lack of robustness in the estimation process of these probabilities. A common method for estimating calibration is to group observations into $B$ bins, defined by the empirical quantiles of predicted values $\hat{s}(\mathbf{x})$. The average of observed values, denoted $\bar{d}_b$ with $b\in \{1, \ldots, B\}$, in each bin $b$ can then be compared with the central value of the bin. Thus, a calibration curve can be constructed by plotting the centers of each bin on the x-axis and the averages of corresponding observations on the y-axis, also referred to as reliability diagrams \citep{reliability-diagrams-weather1990}. When the model is well-calibrated, all $B$ points lie on the bisector. 

\paragraph{Expected Calibration Error}

Given a sample size $n$, the Expected Calibration Error (ECE) \citep{naeini2015obtaining} is determined using two metrics within each bin $b\in\{1, \ldots, B\}$ of quantile-binned predicted scores $\hat{s}(\mathbf{x})$: accuracy $\text{acc}(b)$, which measures the average of empirical probabilities or fractions of correctly predicted classes, and confidence $\text{conf}(b)$, indicating the model's average confidence within bin $b$ by averaging predicted scores. The ECE is then computed as the average over the bins using:
\[
\text{ECE} = \sum_{b=1}^{B} \frac{n_b}{n} \mid \text{acc}(b) - \text{conf}(b) \mid
\]
where $n_b$ is the number of observations in bin $b$. Given that each bin $b$ is associated with set $\mathcal{I}_b$ containing the indices of instances within that bin,
$$
\text{acc}(b) = \frac{1}{n_b} \sum_{i \in \mathcal{I}_b} \mathds{1}_{\hat{d}_i = d_i} \quad \text{and}\quad \text{conf}(b) = \frac{1}{n_b} \sum_{i \in \mathcal{I}_b} \hat{s}(\mathbf{x}_i)\enspace,
$$
are, respectively, the accuracy and the confidence of the model in bin $b$. The predicted class $\hat{d}_i$ for observation $i$ is determined based on a classification threshold $\tau\in [0,1]$, where $\hat{d}_i = 1$ if $\hat{s}(\mathbf{x}_i) \geq \tau$ and $0$ otherwise.

Notably, the aforementioned ECE corresponds to the exact definition of calibration for multi-class prediction \citep{pmlr-v70-guo17a}.  For a recent study investigating the application of this measure to geographic data with ordinal classes, we refer to \citep{machado2024geospatial}.

\subsubsection{Local Regression based measure}
\label{subsubsec:locfitcurve}

We propose an alternative approach to visualize 
model calibration, aiming for a smoother representation than that provided by the method based on quantiles. 

\paragraph{Smoothed calibration curve}

Instead of defining bins, we estimate calibration using a local regression. The benefit of employing local regression techniques over bin-based visualization lies in the fact that given the number of data points, the precision of quantile binning can be suboptimal when determining the appropriate bin count. By contrast, with local regression, one can specify the percentage of nearest neighbors, providing greater flexibility.

Local regression involves applying local polynomial regression techniques, with the definition of ``local'' adopting various forms, including bandwidth or the count of nearest neighbors. 
These characteristics are determined using different approaches, enhancing the precision and smoothness of the analysis. The degree of the polynomial determines the construction of the polynomial regression. For this purpose, we will use the \texttt{locfit} package from R \citep{loader1999}. The \texttt{locfit} library selects a specific set of evaluation points from the dataset to conduct the regression fit. 
By default, the evaluation structure follows a tree-based pattern: the algorithm confines the dataset within a rectangle and divides it into two equal-sized rectangles. Polynomial parameters from the fit at selected points are then used to interpolate values at other locations. While this approach may have been disregarded in high dimensions due to poor properties, it is highly efficient in small dimensions, as in this case with only one predictive feature, $\hat{s}(\boldsymbol{x})$. Moreover, this approach is backed by a solid theoretical foundation, with well-established statistical properties.

To obtain a calibration curve, we fit a local regression model with a degree of 0.\footnote{The choice of degree 0 allows us to compute the local mean of observed events in the vicinity of a predicted probability, illustrating calibration from Equation \ref{eq-calib-E}.} We perform a regression of observed events on predicted probabilities. Then, we employ the trained model to make predictions on a linear space with values in $[0, 1]$. This approach provides a continuous calibration profile, based on uniformly distributed points to evaluate $\g$.

The local regression model relies on neighboring points to make predictions. However, problems arise at the edges when the range of predicted scores $\hat{s}(\mathbf{x})$ does not cover the entire interval $[0,1]$. In such cases, predicted values beyond this range may deviate from the bisector, leading to a misinterpretation of calibration. To prevent this issue, we adjust the linear space used for predictions by the local regression model to align with the full range of observed scores $\hat{s}(\mathbf{x})$.

\paragraph{Local Calibration Score}
Similar to how the Integrated Calibration Index is defined using the LOESS regression method \citep{Austin2019TheIC}, we introduce our methodology, the Local Calibration Score (LCS), which is based on the calibration curve constructed using \texttt{locfit}, as detailed in Section \ref{subsubsec:locfitcurve}. The calculation of the LCS relies on the disparities between this curve and the bisector, in the range from 0 to 1, weighted by the density of the predicted scores $\hat{s}(\mathbf{x})$. To execute this, following the methodology outlined in Section \ref{subsubsec:locfitcurve}, a local regression of degree 0, denoted as $\hat{\g}$, is fitted to the predicted scores $\hat{s}(\mathbf{x})$. This fit is then applied to a vector of linearly spaced values within the interval $[0,1]$. Each of these points is denoted by $l_i$, where $i \in \{1, \ldots, N\}$, with $N$ being the target number of points on the visualization curve. The LCS is calculated by averaging the squared differences between each predicted score $\hat{\g}(l_i)$ and its corresponding linearly spaced value $l_i$, weighted by the density of the observed scores at $l_i$, corresponding to $w_i$:
\begin{equation}
    \text{LCS} = \sum_{i=1}^{n}w_i \big(\hat{\g}(l_i) - l_i\big)^2\enspace .\label{eq-locfit-score}
\end{equation}

\subsection{Impact of a Poor Calibration}
\label{subsec:poor-calibration}

In this section, we aim to explore the impact of a poorly-calibrated model.\footnote{Replication material:
\href{https://github.com/fer-agathe/calibration_binary_classifier}{https://github.com/fer-agathe/calibration\_binary\_classifier}.} To do this, we create synthetic datasets where we know the true probability of a binary event occurrence in advance. Instead of relying on model-derived scores $\hat{s}(\mathbf{x})$, we apply transformations directly to the probabilities. These transformed values serve as surrogate scores, representing what could be obtained from an ML model. This approach enables us to assess the effects of poor calibration on different calibration metrics, particularly our novel calibration measure LCS, as well as on various calibration curves. Additionally, it allows us to investigate the impact of poor calibration on standard goodness-of-fit metrics.

\paragraph{Synthetic data} Following \citet{gutman2022propensity}, we consider a binary variable assumed to follow a Bernoulli distribution: $D_i\sim B(p_i)$, where $p_i$ is the probability of observing $D_i = 1$. We define $p_i$ using the sigmoid function:%
\begin{equation}
p_i = \frac{1}{1+\exp(-\eta_i)}\enspace,\label{eq-true-propensity}
\end{equation}
Here, $\eta_i$ is defined by the equation:%
\begin{equation}
\eta_i = a_1 x_1 + a_2 x_2 + a_3 x_3 - a_4 x_4 + \varepsilon_i\enspace, \label{eq-propensity-eta}
\end{equation}
\sloppy where $x_1$, $x_2$, $x_3$, and $x_4$ are randomly drawn from a uniform distribution $\mathcal{U}[0,1]$, where $(a_1, \dots, a_4)$ are scalars arbitrarily set in our case to $\left(a_1, a_2, a_3, a_4\right) = \left(0.1, 0.05, 0.2, -0.05\right)$, and where $\varepsilon_i \sim \mathcal{N}(0, 0.5^2)$.
We generate $n=2,000$ observations using this DGP. 

Of particular importance, with this experimental setup and a well-defined framework, we establish in the following proposition that logistic regression is asymptotically well-calibrated. The proof is detailed in Appendix~\ref{proof:prop}.

\begin{proposition}\label{prop:logistic}
Consider a dataset $\{(d_i,\mathbf{x{_i}})\}$, where 
{$\mathbf{x}$}
are $k$ features ($k$ being fixed), so that $D|\boldsymbol{X}=\mathbf{x} \sim \mathcal{B}\big(s(\mathbf{x})\big)$ where
$$
s(\mathbf{x})=\frac{\exp[\beta_0+\mathbf{x}^\top\boldsymbol{\beta}]}{1+\exp[\beta_0+\mathbf{x}^\top\boldsymbol{\beta}]}.
$$
Let $\widehat{\beta}_0$ and $\widehat{\boldsymbol{\beta}}$ denote maximum likelihood estimators. Then, for any $\mathbf{x}$, the score is defined as 
$$
\hat{s}(\mathbf{x})=\frac{\exp[\hat\beta_0+\mathbf{x}^\top\hat{\boldsymbol{\beta}}]}{1+\exp[\hat\beta_0+\mathbf{x}^\top\hat{\boldsymbol{\beta}}]}
$$
is well-calibrated in the sense that
$$
\mathbb{E}[D \mid \hat{s}(\mathbf{x}) = p]\overset{a.s.}{\to} p\text{ as }n\to\infty.
$$
\end{proposition}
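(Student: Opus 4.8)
The plan is to combine three ingredients: (i) strong consistency of the logistic maximum likelihood estimator under correct specification, (ii) the fact that the \emph{true} score $s$ is exactly calibrated through itself, and (iii) a stability argument that transfers exact calibration from $s$ to $\hat s$ in the limit, handling the null-mass conditioning event in the asymptotic sense already flagged after Equation~\ref{eq-calib-E}.

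\textbf{Step 1 (consistency and convergence of the score).} Since the model $D\mid\boldsymbol X=\mathbf x\sim\mathcal B(s(\mathbf x))$ is correctly specified, the features are bounded, and the limiting Fisher information is nonsingular, the normalized log-likelihood is concave and converges, by the strong law of large numbers and uniformly on compacts, to the population log-likelihood whose unique maximizer is $(\beta_0,\boldsymbol\beta)$; hence $(\widehat\beta_0,\widehat{\boldsymbol\beta})\overset{a.s.}{\to}(\beta_0,\boldsymbol\beta)$. As $(\beta_0,\boldsymbol\beta)\mapsto \exp[\beta_0+\mathbf x^\top\boldsymbol\beta]/(1+\exp[\beta_0+\mathbf x^\top\boldsymbol\beta])$ is continuous, uniformly over $\mathbf x$ in the (compact) support, it follows that $\hat s(\mathbf x)\to s(\mathbf x)$ a.s., uniformly in $\mathbf x$; equivalently the estimated index $\widehat\eta(\mathbf x)=\widehat\beta_0+\mathbf x^\top\widehat{\boldsymbol\beta}$ converges a.s.\ to $\eta(\mathbf x)=\beta_0+\mathbf x^\top\boldsymbol\beta$.

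\textbf{Step 2 (exact calibration of the true model).} Because $D\mid\boldsymbol X\sim\mathcal B(s(\boldsymbol X))$ we have $\mathbb E[D\mid\boldsymbol X]=s(\boldsymbol X)$, and $s(\boldsymbol X)$ is $\sigma(\boldsymbol X)$-measurable, so the tower property gives $\mathbb E[D\mid s(\boldsymbol X)]=\mathbb E[\mathbb E[D\mid\boldsymbol X]\mid s(\boldsymbol X)]=\mathbb E[s(\boldsymbol X)\mid s(\boldsymbol X)]=s(\boldsymbol X)$, i.e.\ $\mathbb E[D\mid s(\boldsymbol X)=p]=p$ for every level $p$ in the range of $s$. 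The identical computation applies to any fixed measurable function of $\boldsymbol X$, so for a \emph{frozen} sample one even has $\mathbb E[D\mid \hat s(\boldsymbol X)=p]=p$, the only caveat being that $\{\hat s(\boldsymbol X)=p\}$ may be a null event for finite $n$ — exactly why the statement is phrased as an $n\to\infty$ limit.

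\textbf{Step 3 (passing to the limit — the main obstacle).} The delicate point is to let $\hat s\to s$ interact with the conditioning on the null-mass set $\{\hat s(\boldsymbol X)=p\}$. Because $\hat s$ is a strictly increasing transformation of the linear index, $\{\hat s(\boldsymbol X)=p\}=\{\widehat\beta_0+\boldsymbol X^\top\widehat{\boldsymbol\beta}=q\}$ with $q:=\log\!\big(p/(1-p)\big)$, a hyperplane in feature space whose coefficients converge a.s.\ (Step 1) to those of $\{\mathbf x:\beta_0+\mathbf x^\top\boldsymbol\beta=q\}=\{\mathbf x:s(\mathbf x)=p\}$. Reading the conditional expectation through a shrinking slab, $\mathbb E\big[D\mid |\widehat\beta_0+\boldsymbol X^\top\widehat{\boldsymbol\beta}-q|<\varepsilon\big]\to\mathbb E\big[D\mid |\beta_0+\boldsymbol X^\top\boldsymbol\beta-q|<\varepsilon\big]$ by the uniform convergence of Step 1 and the absolute continuity of the law of $\boldsymbol X$ (so the boundary hyperplanes carry no mass); letting $\varepsilon\downarrow0$, Lebesgue differentiation gives that the right-hand side tends to $\mathbb E[D\mid s(\boldsymbol X)=p]=p$ by Step 2. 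Combining these limits — and invoking dominated convergence to interchange them, as noted after Equation~\ref{eq-calib-E}, if one wants the statement exactly as written — yields $\mathbb E[D\mid \hat s(\boldsymbol X)=p]\overset{a.s.}{\to}p$. I expect this last step, and in particular making the joint $n\to\infty$, $\varepsilon\downarrow0$ limit rigorous, to be where essentially all the work lies; Steps~1 and~2 are textbook.
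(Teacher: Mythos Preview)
Your argument is essentially correct, but one claim in Step~2 is wrong and your Step~3 is heavier than what the paper actually does.

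\textbf{The slip in Step~2.} The tower property gives $\mathbb E[D\mid h(\boldsymbol X)]=\mathbb E[s(\boldsymbol X)\mid h(\boldsymbol X)]$ for \emph{any} measurable $h$, but this equals $h(\boldsymbol X)$ only when $s(\boldsymbol X)$ is $\sigma(h(\boldsymbol X))$-measurable and coincides with $h(\boldsymbol X)$ --- in practice, when $h=s$. Hence ``for a frozen sample one even has $\mathbb E[D\mid \hat s(\boldsymbol X)=p]=p$'' is false whenever $\hat s\neq s$; indeed this is precisely why an asymptotic statement is needed at all. Fortunately your Step~3 never uses that claim, so the error is cosmetic.

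\textbf{Comparison with the paper.} The paper bypasses your slab/Lebesgue-differentiation machinery entirely. It first treats $k=1$, where $\hat s$ is a continuous bijection $\mathbb R\to(0,1)$: then $\{\hat s(X)=p\}=\{X=\hat s^{-1}(p)\}$ and one reads off the \emph{exact} identity
\[
\mathbb E[D\mid \hat s(X)=p]=s\big(\hat s^{-1}(p)\big),
\]
so that MLE consistency plus continuity of $s$ and of $\hat s^{-1}$ gives $s(\hat s^{-1}(p))\to s(s^{-1}(p))=p$ directly, with no $\varepsilon$-slab and no interchange of limits. The multivariate case is then dispatched in one sentence by invoking the same invertibility of the link on the linear index. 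Your hyperplane argument is exactly what would be needed to make that last sentence airtight when $k>1$ (since on the estimated level set the true index is not constant), so you are more careful at the price of more machinery; the paper's closed-form route via $s\circ\hat s^{-1}$ is shorter and is the intended argument.
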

\begin{proof}
    Proof in Appendix~\ref{proof:prop}. If $k$ were increasing with $n$, \cite{bai2021don} showed that logistic regression is over-confident. However, assuming here that $k$ is fixed provides a complementary perspective.
\end{proof}
This justifies the possibility of achieving a perfectly calibrated model with this DGP, making the proposed synthetic data appealing to study (mis-)calibration.

Next, we introduce two types of transformations to the true probabilities $p$ to simulate uncalibrated modeling: one directly applied to the latent probability and another applied to the linear predictor $\eta$. Specifically, we introduce a scaling parameter that modifies the latent probability, altering Equation~\ref{eq-true-propensity} to:%
\begin{equation}
p_i^{u} = \left(\frac{1}{1+\exp(-\eta_i)}\right)^{\alpha}\enspace. \label{eq-true-propensity-decalibrated}
\end{equation}

A second scaling parameter which modifies the linear predictor changes Equation~\ref{eq-propensity-eta} to:%
\begin{equation}
\eta_i^u = \gamma \times (-0.1)x_1 + 0.05x_2 + 0.2x_3 - 0.05x_4 + \varepsilon_i, \label{eq-propensity-eta-decalibrated}
\end{equation}

The resulting values, given by $p^u$, are considered as the scores $\hat{s}(\mathbf{x})$ that could be returned by a predictive model. Note that when $\alpha=\gamma=1$, no transformation occurs, representing the benchmark situation of a well-calibrated model. In the simulations, we examine variations in $\alpha$ and $\gamma$ across the range ${1/3, 1, 3}$, considering each parameter individually while keeping the other fixed at $1$. The effects of the transformations on the probabilities are shown in Figure~\ref{fig-probas-transfo}.\footnote{See Figure~\ref{fig-distrib-probas-transfo} in the Appendix for histograms.} Notably, decreasing (increasing) $\alpha$ shifts values closer to 1 (to 0). Decreasing $\gamma$ concentrates values around 0.5, while increasing $\gamma$ disperses probabilities around 0.5.

\begin{figure}[ht]
\centering
\includegraphics[width = \linewidth]{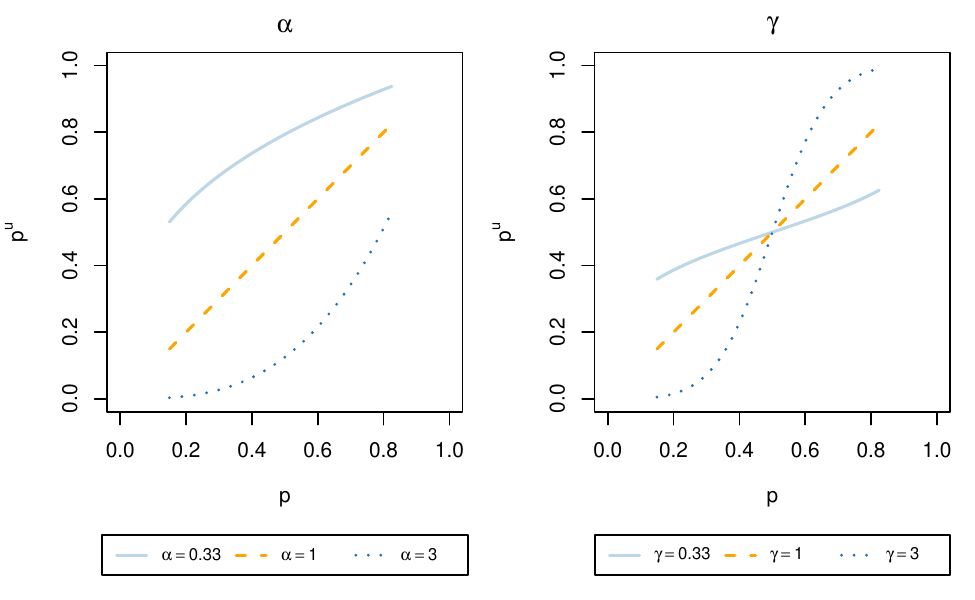}
\caption{Distorted Probabilities as a Function of True Probabilities, Depending on the Value of $\alpha$ (left) or $\gamma$ (right).}\label{fig-probas-transfo}
\end{figure}

For each value of parameter $\alpha$ or $\gamma$, we generate $200$ independent samples, each consisting of $2,000$ observations. Within each sample, after applying $p^u$, we compute the various metrics previously mentioned. 
Since we are aware of the true scores, we calculate the associated MSE, $n^{-1} \sum_{i=1}^{n} (p_i - p_i^u)^2$, representing the ground-truth calibration and enabling us to understand the characteristics that accurate calibration metrics should exhibit. Its equivalent when the true probability distribution is not known corresponds to the Brier score \citep{brier_1950}, where $p_i$ is replaced by $d_i$. 
Figure~\ref{fig-calib-simuls-metrics} presents the results in the form of boxplots for the calculation of the calibration measures when $\alpha$ varies (top) or when $\gamma$ varies (bottom). For each metric, a degradation is observed when the scores deviate from the true probabilities, \textit{i.e.}, when $\alpha \ne 1$ or $\gamma \ne 1$. It is noteworthy that, with data generated from our DGP, the proposed LCS approach remarkably reflects the true error being close to 0 when $\alpha = \gamma = 1$.
While the Brier score mirrors the dynamics of the MSE, it registers excessively high values, surpassing 0.23, for the true probability distribution. Conversely, the ECE appears less suitable for assessing calibration within this binary scenario.

\begin{figure}[ht]
\centering
\includegraphics[width = \linewidth]{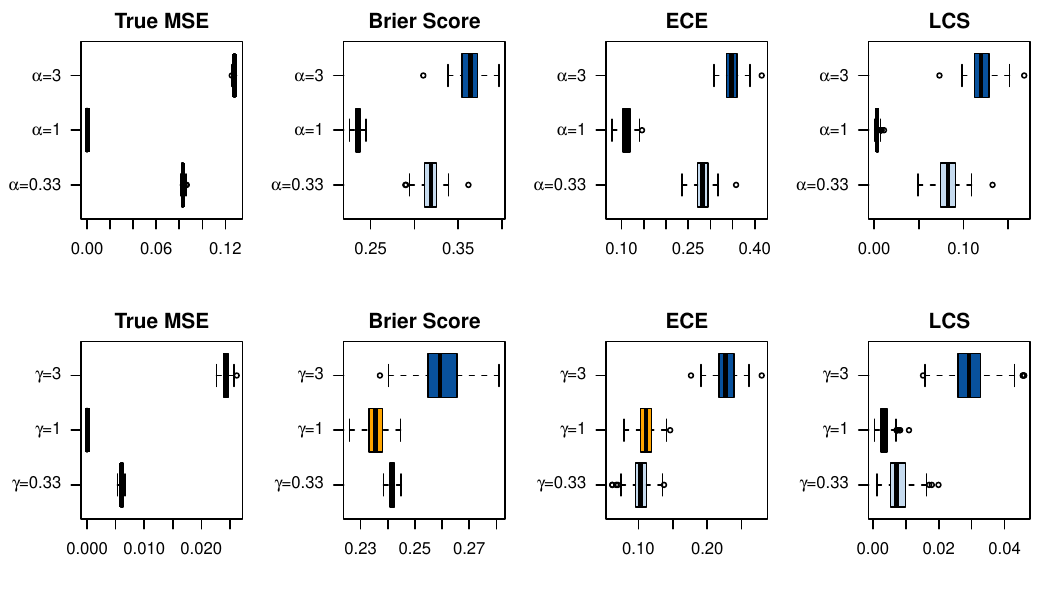}
\caption{Calibration Metrics on 200 Simulations for each Value of $\alpha$ (top) or $\gamma$ (bottom).}\label{fig-calib-simuls-metrics}
\end{figure}

Calibration can also be assessed using calibration curves. Figure~\ref{fig-calib-simuls-calib-curve-locfit} shows for each type of score distortion the average of the calibration curve computed using local regression with \texttt{locfit}, for the 200 simulations. The error band corresponds to a 95\% bootstrap confidence interval. The distribution of true probabilities is depicted at the top of each graph.\footnote{This is represented as a histogram obtained by averaging the number of observations in each bin across the 200 simulations.} The curves for cases where the model is effectively well-calibrated ($\alpha=1$ or $\gamma=1$) are notably aligned with the bisector. Conversely, in other scenarios, the calibration curves deviate considerably from the alignment with the bisector. Although similar visualizations are obtained with the calibration curves computed using bins defined by quantiles (see Figure~\ref{fig-calib-simuls-calib-curve-quant} in the Appendix), calibration curves obtained with local regression exhibit smoother patterns.

\begin{figure}[ht]
\centering
\includegraphics[width = .7\linewidth]{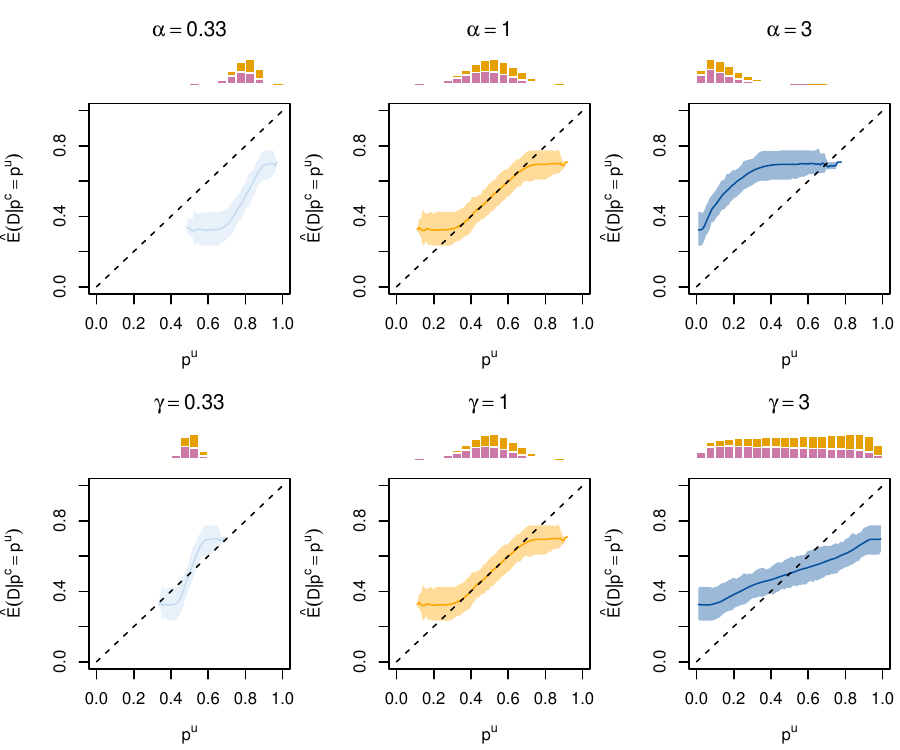}
\caption{Calibration Curve Obtained with Local Regression, on 200 simulations for each Value of $\alpha$ (top) or $\gamma$ (bottom). Distribution of the true probabilities are shown in the histograms (gold for $d=1$, purple for $d=0$).}\label{fig-calib-simuls-calib-curve-locfit}
\end{figure}

Finally, we explore the sensitivity of standard goodness-of-fit metrics
to calibration. The applied transformations to probabilities for defining scores reflecting a poorly calibrated model show no discernible impact on standard goodness-of-fit measures, as illustrated in Figure~\ref{fig-calib-simuls-std-metrics}. Metrics such as accuracy, sensitivity, specificity, or AUC are not impacted by monotone transformations on the probabilities. When the focus shifts to using predicted scores from a binary classifier rather than merely class prediction accuracy, standard goodness-of-fit metrics may not detect poor calibration. This section explores the benefits of the proposed DGP, highlighting the rationale for employing a post-hoc recalibrator to address this issue.

In the following section, we examine different recalibration approaches, drawing from various techniques in the literature. We then delve deeper into our understanding of local regression methodologies for recalibration in Section~\ref{subsubsec:local_reg}.

\begin{figure}[ht]
\centering
\includegraphics[width = \linewidth]{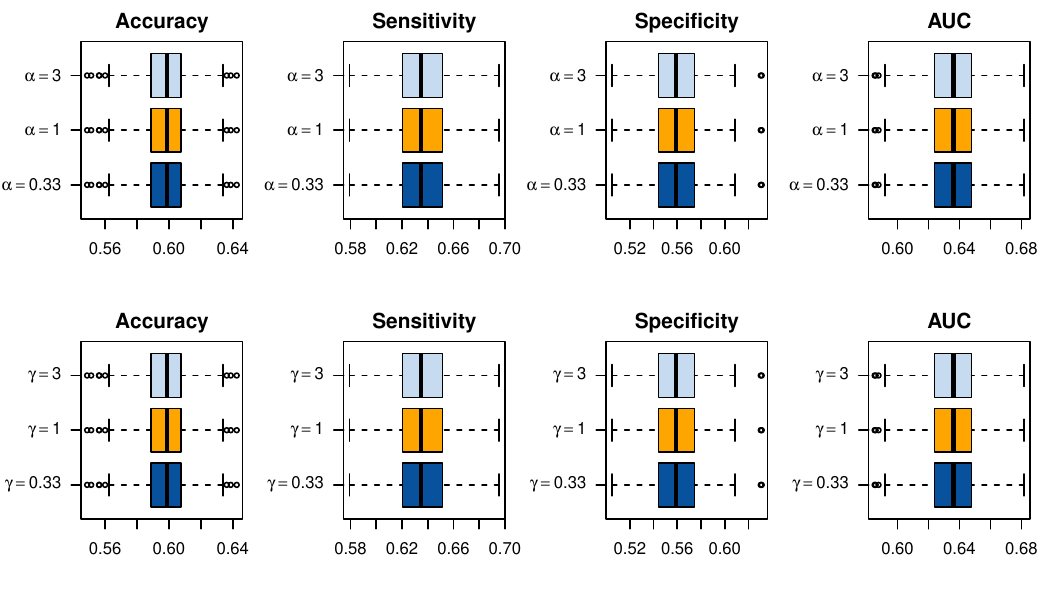}
\caption{Standard Goodness of Fit Metrics on 200 Simulations for each Value of $\alpha$ (top) or $\gamma$ (bottom). The probability threshold is set to $\tau=0.5$.}\label{fig-calib-simuls-std-metrics}
\end{figure}

\section{Recalibration}
\label{sec:recalibration}

When using scores generated by a model predicting the probability of a binary event, the literature advocates recalibrating the model by applying a transformation to the scores \citep{platt1999probabilistic, zadrozny2002transforming, kull2017EJS}.

To address overfitting during the learning of this mapping, it is recommended to partition data into three sets \citep{zadrozny2002transforming, kull2017EJS}: a training set for classifier training, a calibration set for recalibrator training, and a test set for computing calibration metrics. Adequate data is required for this tripartite division.

In this section, we present the prevalent recalibration techniques found in the literature. Then, we evaluate their impact on calibration performance using metrics and visualization methods from Section \ref{sec:calibration}.

\subsection{Recalibration Methods}\label{sec-recalib-methods}

\paragraph{Platt Scaling}

This approach was initially introduced to map SVM outputs to well-calibrated posterior probabilities \citep{platt1999probabilistic}. The method consists of applying Logistic Regression to the output probabilities of a binary classifier. The Logistic Regression parameters, $a$ and $b$, are learned on a calibration dataset, held out from a train dataset:
\begin{equation}
    \g\big(\hat{s}(\mathbf{x})\big) = \frac{1}{1+\exp{\left\{-\big(a\hat{s}(\mathbf{x})+b\big)\right\}}}\enspace.\label{eq-plattformula}
\end{equation}

\paragraph{Isotonic Regression}

This solution arises from a constrained optimization problem \citep{zadrozny2002transforming}, solved using the Pool-Adjacent-Violators Algorithm, ensuring that corrected predicted scores remain monotonic. In Equation \ref{eq-isoreg}, $d_{(i)}$ corresponds to the value in $\{d_1,\cdots,d_n\}$ associated with the $i$-th largest predicted score $\{\hat{s}(\mathbf{x}_1),\cdots,\hat{s}(\mathbf{x}_n)\}$.

\begin{equation}
    \begin{array}{rrclcl}
    \displaystyle \min_{\beta_1,\ldots, \beta_n} & \multicolumn{3}{l}{\sum_{i=1}^n (d_{(i)} - \beta_i)^2}\\
    \textrm{s.t.} & \beta_1 \leq \ldots \leq \beta_n
    \label{eq-isoreg}
\end{array}\enspace .
\end{equation}
Isotonic regression assumes the initial model's predicted scores $\hat{s}(\mathbf{x})$ are well-ordered, limiting its ability to correct non-monotonic probability distortions, as seen in methods like Platt scaling (Equation~\ref{eq-plattformula}).

\paragraph{Beta Calibration}

The Beta calibration method \citep{kull2017EJS} estimates the following calibration curve using three parameters, $a$, $b$ and $c$:
\begin{equation}
    \g\big(\hat{s}(\mathbf{x})\big) = \frac{1}{1+\exp{\left\{-c\right\}}\big(\sfrac{\hat{s}(\mathbf{x})^a}{(1-\hat{s}(\mathbf{x}))^b}\big)}\enspace.
\end{equation}

The condition $a$, $b \geq 0$ leads to an increasing map function. In contrast to Platt scaling, the Beta calibration family includes the identity function, allowing it to maintain score calibration when it is already calibrated.

\subsubsection{Local Regression}
\label{subsubsec:local_reg}

Interestingly, the local regression method serves a dual role, functioning both as a visualization tool (as discussed in Section~\ref{subsubsec:locfitcurve}) and as an approach for recalibration. Specifically, when fitting a local regression of degree 0 to the predicted scores $\hat{s}(\mathbf{x})$, it estimates the following quantity: $\hat{\mathbb{E}}[D \mid \hat{s}(\mathbf{x}) \in \mathcal{V}_p]$, where $p \in [0,1]$. Here, $\mathcal{V}_p$ represents the neighborhood of a given $p$, defined using a percentage of nearest neighbors among the set of evaluation points when using \texttt{locfit} (as mentioned in Section~\ref{subsubsec:locfitcurve}).
By definition of local regression of degree 0, the estimation of the expected value approximates the function $\g$ defined in Equation~\ref{eq-calib-E}. Nevertheless, to improve the smoothness of expectancy estimations, particularly in regions with limited data points (e.g., when the predicted scores $\hat{s}(\mathbf{x})$ are close to 0 and 1), employing \texttt{locfit} with polynomial degrees of 1 and 2 is an alternative worth considering, while remaining aware of the potential occurrence of negative values.

\subsection{Scores Recalibration}

We apply the recalibration techniques presented in Section~\ref{sec-recalib-methods} to our simulated datasets. For both  calibration and test samples, we calculate calibration and goodness-of-fit metrics after replacing the uncalibrated scores $p^u$ with their recalibrated values, denoted $p^c$.
For each metric and scenario where we transformed the true probabilities to induce poor calibration, we compute the difference between the metric obtained with the recalibrated scores $p^c$ and that obtained with the uncalibrated scores $p^u$. The results for MSE, AUC and LCS are presented in Figure~\ref{fig-recalib-simuls-metrics-gamma3}, specifically for $\gamma=3$.\footnote{Further values for $\alpha$ and $\gamma$ are in the Appendix, Figure \ref{fig-recalib-simuls-calib-metrics}.}

Regardless of the method employed, recalibration leads to a decrease in MSE in both the calibration and test samples. However, this appear to occur at the expense of a reduction in AUC in the test sample. Meanwhile, calibration, measured with the LCS, indeed improves after applying any of the considered techniques. The same observation holds true for the calibration curves calculated using the method based on local regression.\footnote{Calibration curves for all values of $\gamma$ and $\alpha$ are provided in the Appendices, in Figures~\ref{fig-recalib-simuls-calib-curve-quant-true_prob-no-calib}, \ref{fig-recalib-simuls-calib-curve-quant-platt-isotonic}, and~\ref{fig-recalib-simuls-calib-curve-quant-locfit} for curves obtained by quantile binning, and in Figures ~\ref{fig-recalib-simuls-calib-curve-locfit-true_prob-no-calib}, \ref{fig-recalib-simuls-calib-curve-locfit-platt-isotonic-beta}, and~\ref{fig-recalib-simuls-calib-curve-locfit-locfit} for curves obtained by local regression.}

\begin{figure}[ht]
\centering
\includegraphics[width = \linewidth]{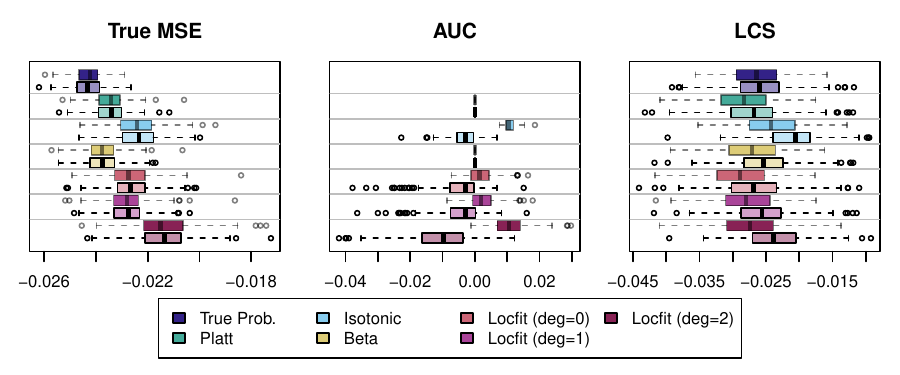}
\caption{Metrics After Recalibration (for $\gamma=3$), on the Calibration (transparent colors) and on the Test Set (full colors).}\label{fig-recalib-simuls-metrics-gamma3}
\end{figure}

Overall, we observe that across our 200 simulations, the use of recalibration techniques allows for achieving improved calibration, which comes with sacrificing a slight decrease of precision. However, caution must be exercised when interpreting these results. The subsequent section seeks to extend the analysis by using a real-world dataset, thereby verifying whether the observed outcomes in simulations are generalizable and not contingent on the specifics of the DGP.

\section{Experimentation : Calibration of a Random Forest}
\label{sec:expe-RF}

In this section, we focus on the application of RF models to predict the occurrence probability of a binary event $D \in \{0,1\}$. First, we compare RF when considered in the context of regression versus calibration. Then, we investigate the relationship between seeking accurate class predictions and the calibration of models.

\subsection{Classifier or Regressor}
\label{subsec:class-or-reg}

When RFs are employed to predict binary outcome variables, one can either train a classifier or a regressor. However, the scores returned by these two types of models are computed in very different ways. The score returned by an RF classifier corresponds to a majority class vote, whereas an RF regressor estimates probabilities by averaging initial class membership probabilities. 

In a classification forest comprising $M$ trees, each tree $m \in \{1,\ldots, M\}$ of the forest returns, for an observation $i$, a majority vote $\hat{d}_{m,i} \in \left\{0,1\right\}$. The predicted score for this observation is the average of the majority votes within the tree, $\hat{p}^{\text{class}}_i = \sum_{m=1}^{M} \hat{d}_{m,i}$. When training an RF regression on a binary target variable, the score returned by the forest for an observation $i$ is $\hat{p}^{\text{reg}}_i = \sum_{m=1}^{M} \hat{p}_{m,i}$, where $\hat{p}_{m,i}$ is calculated as the average of the observations in the terminal leaf to which observation $i$ belongs to the tree $m$.

Given the theoretical reliance of the forest regressor on probabilities, we anticipate that its calibration should be superior to that of the classifier, compared to \cite{calibRF2008}. 
Our objective is to compare the goodness-of-fit and calibration metrics of both models, both with and without the application of recalibration techniques on the estimated scores.

\paragraph{Data}

For our illustrations, we use data obtained from UCI \citep{misc_default_of_credit_card_clients_350}, presenting research customers' default payments in Taiwan. This dataset contains $n = 30,000$ instances and 23 numeric features. The outcome variable, corresponding to the observed default payment in next month, is positive in 22.12\% of cases. Following the methodology outlined in \cite{datapred2019}, we employ the Synthetic Minority Over-sampling Technique (SMOTE)~\citep{chawla2002smote} at a rate of 200\% to rebalance the data.

To predict default payments, we apply both RF algorithms on the rebalanced dataset.\footnote{We used the the \texttt{randomForest()} function from the R package \texttt{randomForest}.} The training set comprises 50\% of the dataset, while the remaining portion is further divided into a calibration set used to train a recalibrator and a test set to evaluate the goodness-of-fit and calibration of the models. Given the size of the dataset, this approach is feasible. For smaller datasets, \cite{caliForest2020} recommend using out-of-bag samples for training the recalibration method.

\paragraph{Methodology}

We conduct a grid search to find the set of hyperparameters that optimize a criterion: the out-of-bag MSE for the regressor, and the error rate for the classifier. The hyperparameters we vary include the number of trees, the number of variables considered for splitting, and the minimum number of observations in terminal nodes.\footnote{For further details on the grid search, refer to Section~\ref{sec-appendix-rf} in the appendices.} Once the hyperparameters are selected for both types of forests, the forests are used to make predictions on the remaining data not used in training. We then perform 200 simulations by splitting these predictions into two samples: a calibration sample and a test sample. 
For each simulation, we measure calibration and goodness-of-fit before and after recalibration, on both samples. This process yields a distribution of estimation quality and calibration metrics. 

\paragraph{Calibration}

Simulation results are presented in Figure~\ref{fig-recalib-rf-tuned-auc-lsc}. Considering the inefficiency of the ECE and Brier score, observed in the previous Section (see Figure~\ref{fig-calib-simuls-metrics}), our novel metric, LCS (Section~\ref{sec:measure-calibration}) is the only calibration metric shown in the Figure. For the goodness-of-fit metric, we only calculate the AUC, as it is invariant to the probability threshold.\footnote{Additional metrics are reported in the Appendix, in Figure~\ref{recalib-rf-tuned-metrics-gof} for goodness-of-fit, and in Figure~\ref{recalib-rf-tuned-metrics-calib} for calibration.} Also, the true MSE cannot be computed here, due to the unavailability of true data probability distributions.

\begin{figure}[ht]
\centering
\includegraphics[width = \linewidth]{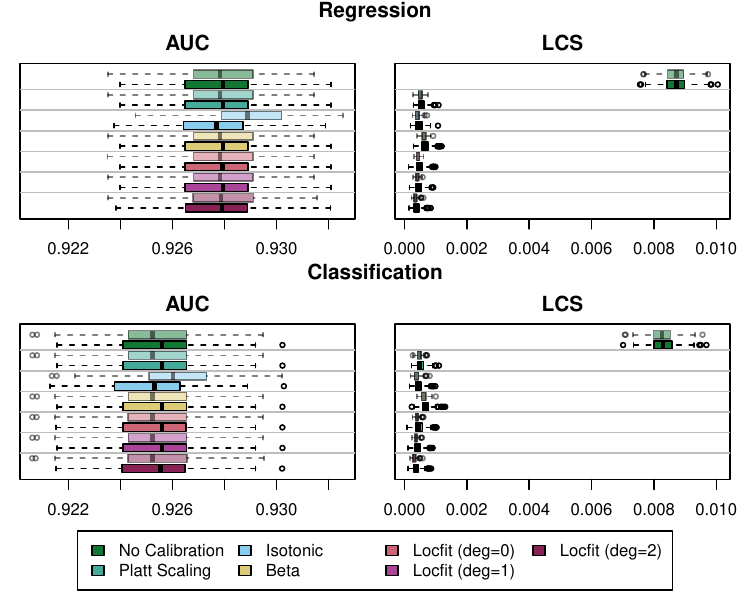}
\caption{Metrics Computed on 200 Replications, for the Regression Random Forest (top) and for the Classification Random Forest (bottom), on the Calibration (transparent colors) and on the Test Set (full colors).}\label{fig-recalib-rf-tuned-auc-lsc}
\end{figure}

Figure~\ref{fig-recalib-rf-tuned-auc-lsc} illustrates that prior to recalibration, the regressor exhibits slightly higher AUC and superior calibration than the classifier, thereby confirming the hypothesis outlined in Section~\ref{subsec:class-or-reg}. Moreover, although both algorithms appear to be well-calibrated, all recalibration methods enhance the calibration of these models without compromising the AUC. However, it is worth noting that isotonic regression seems to overfit in terms of AUC on the test set, an observation also documented by \cite{kull2017EJS}.

\subsection{Optimizing Class Predictions and Calibration}

ML models are typically fine-tuned to optimize hyperparameters to maximize goodness-of-fit in binary classification, often with less emphasis on calibration. However, the models that achieve the highest accuracy may not necessarily exhibit superior calibration. To investigate this issue, we assess the LCS across various goodness-of-fit levels, as measured by AUC, throughout the hyperparameter optimization process.

\begin{figure}[ht]
\centering
\includegraphics[width = \linewidth]{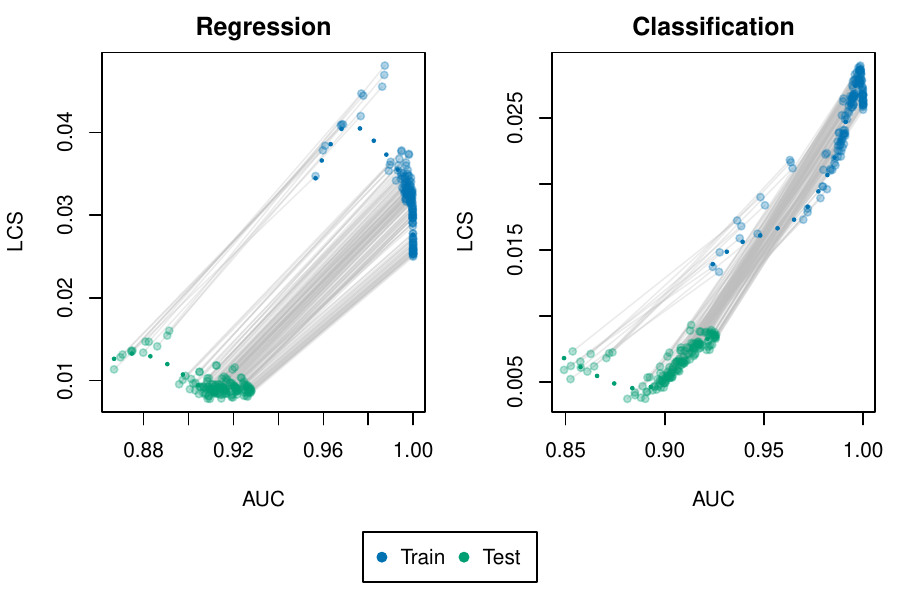}
\caption{Calibration vs. Performance of Random Forest Regression (left) and Random Forest Classification (right). Each point represents an estimation obtained from a set of hyperparameters. The gray lines help identify the estimations made within each sample using the same model.}\label{fig-rf-lcs-vs-auc}
\end{figure}

Figure~\ref{fig-rf-lcs-vs-auc} illustrates that for both types of RF, the order of calibration and performance metrics on the train sample is respected by the test sample. Furthermore, this Figure demonstrates that, optimizing the performance of an RF classifier with respect to the AUC reduces the calibration on the test sample. Thus, when using this type of model to directly employ its predicted scores, it may be necessary to consider both a performance metric and a calibration measure in the hyperparameter optimization process to consider these scores as the probabilities of belonging to the positive class. In contrast, in the case of regression, optimizing the AUC appears to enhance calibration on the test sample. 
However, opting for a model with an AUC not too close to 1 could be more advantageous for preventing overfitting. This involves sacrificing a small percentage of the AUC on the test sample while preserving good calibration, as depicted by the test points falling within an AUC range of 0.89 to 0.92.

\section{Conclusion}

This study aims to deepen our understanding of various calibration measures and methods for recalibrating binary classifiers. This is achieved by analyzing a simulated dataset generated from a logistic function, where the true probability distribution is known. We highlight the flexibility of synthetic data, unveiling nuances in calibration metrics, thereby identifying limitations in the Brier score and ECE within this context. We correct these limitations by introducing a novel calibration metric: the Local Calibration Score. This underscores the importance of local regression techniques for visualization and classifier recalibration. Experimental results using RF classifier and regressor to predict default risk demonstrate slightly better accuracy and calibration with the regressor. Notably, our evaluation of the LCS across various AUC levels during hyperparameter optimization reveals that RF classifiers achieving the highest goodness-of-fit do not necessarily exhibit superior calibration.

\bibliography{biblio.bib}

\newpage

\newgeometry{left=2.5cm,right=2.5cm, bottom=3cm, top=3cm}

\begin{center}
\rule{0.75\textwidth}{0.4pt}\\
\vspace{4mm}
{\scshape\Large From Uncertainty to Precision:\\ Enhancing Binary Classifier Performance through Calibration}\\
(Supplementary Material)\\
\rule{0.75\textwidth}{0.4pt}
\end{center}
\appendix

\section{Online Replication Book}

We provide replication material on GitHub\newline%
\href{https://github.com/fer-agathe/calibration_binary_classifier}{https://github.com/fer-agathe/calibration\_binary\_classifier}.

\section{Proofs}

\subsection{Proof of Proposition \ref{prop:logistic}}\label{proof:prop}

As a starting point, suppose that there is single feature, so that $\mathbf{x}$ can be denoted $x$. Suppose that $D|{X}={x} \sim \mathcal{B}\big(s({x})\big)$ where$$
s({x})=\frac{\exp[\beta_0+\beta_1{x}]}{1+\exp[\beta_0+\beta_1{x}]}
$$
with $\beta_1\neq 0$, and let $\widehat{\beta}_0$ and $\widehat{{\beta}}_1$, denote maximum likelihood estimators, so that the model is well specified. Then, for any ${x}$, the score is
$$
\hat{s}({x})=\frac{\exp[\hat\beta_0+\hat\beta_1{x}]}{1+\exp[\hat\beta_0+\hat\beta_1{x}]}.
$$
Here both $s$ and $\hat{s}$ are continuous and invertible. Given $p\in(0,1)$, $\{\hat{s}({x})=p\}$ corresponds to $
\{x=\hat{s}^{-1}(p)\},$
since mapping $\hat{s}$ is one-to-one. Thus, $D$ conditional on $\{\hat{s}({x})=p\}$ is therefore a Bernoulli variable with mean $s(\hat{s}^{-1}(p))$.
And because $\hat{s}$ and $s$ are continuous and bijective functions
$$
\begin{cases}
    \hat{\beta}_0\to\beta_0\\
    \hat{\beta}_1\to\beta_1
\end{cases}
\text{ as }n\to\infty
~\Longrightarrow~\forall x,p~
\begin{cases}
    \hat{s}(x)\to s(x)\\
    \hat{s}^{-1}(p)\to s^{-1}(p)
\end{cases}
\text{ as }n\to\infty
$$
since the model is well specified, where the convergence is in probability here. 
Thus
$$
s(\hat{s}^{-1}(p))\to p\text{ as }n\to\infty
$$
and
$$
\mathbb{E}[D|\hat{s}(x)=y]=p(\hat{s}^{-1}(p))\to p\text{ as }n\to\infty.
$$
This property holds in higher (fixed) dimensions, if the model is well specified, since functions are continuous and invertible (linear models with continuous and invertible link functions). The case where the dimension of $\mathbf{x}$ increases with $n$ is discussed in \cite{bai2021don}.

\section{Calibration with Simulated Data}

This appendix provides additional figures regarding the 200 simulations made using the data generating process described in the main text.

We generated data from the data generating process described in Equations~\ref{eq-true-propensity} and~\ref{eq-propensity-eta} and applied transformations either on the probabilities (varying $\alpha$ in Equation~\ref{eq-true-propensity-decalibrated}) or on the linear predictor (varying $\gamma$ in Equation~\ref{eq-propensity-eta-decalibrated}).

For each value of $\alpha=\{1/3, 1, 3\}$ and $\gamma=\{1/3, 1, 3\}$, we generated 200 datasets. The distribution of a single dataset for each value of $\alpha$ and $\gamma$ is shown in Figure~\ref{fig-distrib-probas-transfo}.

\begin{figure}[ht]
\centering
\includegraphics[width = .7\linewidth]{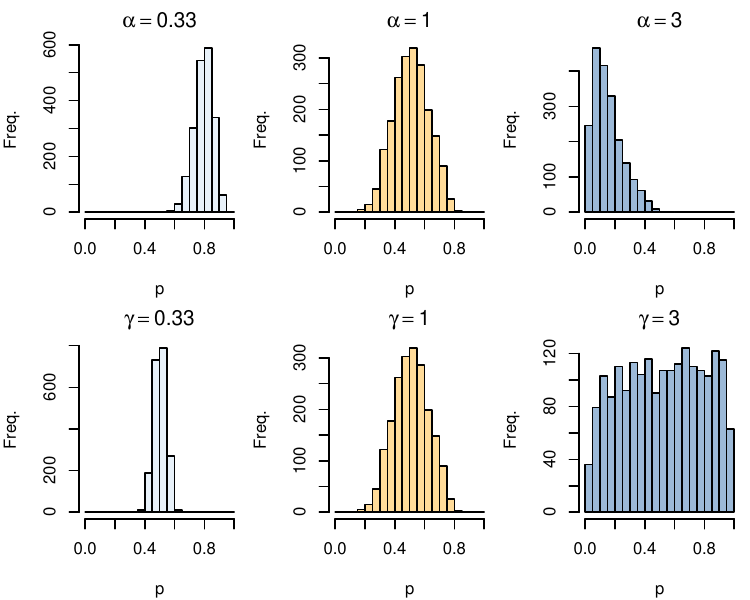}
\caption{Distribution of Distorted Probabilities $p^u$ Depending on $\alpha$ (top) or $\gamma$ (bottom).}\label{fig-distrib-probas-transfo}
\end{figure}


\def\figwidth{.45\textwidth}

\subsection{Metrics}

Figure~\ref{fig-recalib-simuls-calib-metrics} displays different calibration metrics (True MSE in Figure~\ref{fig-recalib-simuls-metrics-truemse}, the Brier Score in Figure~\ref{fig-recalib-simuls-metrics-brier}, the Expected Calibration Error in Figure~\ref{fig-recalib-simuls-metrics-ece}, and the Local Calibration Score in Figure~\ref{fig-recalib-simuls-metrics-lcs}), for the complete set of simulations. Each graph of each panel in the Figure shows the distribution of a metric computed over 200 replications of the simulations for a scenario in which the scores are distorted, on the calibration set (transparent colors) and on the test set (solid colors). In each case, the metric is computed using either the \textcolor{bleuTOL}{true probabilities $p$} which represents the ground truth, the \textcolor{vertTOL}{uncalibrated scores} $p^u$, or the recalibrated scores, $p^c$. The recalibration techniques are the following: \textcolor{vertClairTOL}{Platt scaling}, \textcolor{bleuClairTOL}{Isotonic regression}, \textcolor{sableTOL}{Beta calibration}, and Local regression with varying degrees: \textcolor{parmeTOL}{degree 0}, \textcolor{magentaTOL}{degree 1}, and \textcolor{roseTOL}{degree 2}. See Table~\ref{tab:recalibration-techniques} for a summary. This figure complements Figure~\ref{fig-recalib-simuls-metrics-gamma3} from the main text. However, unlike in Figure~\ref{fig-recalib-simuls-metrics-gamma3}, the values are not expresses here as the difference with the value observed in the case where the scores used are the \textcolor{vertTOL}{uncalibrated scores} $p^u$.

\begin{table}[ht]
    \centering\small
    \caption{Recalibration Techniques Used in the Simulations}
    \begin{tabular}{lll}
        \hline\hline
        \textbf{Legend} & \textbf{Scores Used} & \textbf{Recalibration Technique} \\
        \midrule
        \fcolorbox{black}{bleuTOL}{\rule{0pt}{5pt}\rule{5pt}{0pt}}\quad \textcolor{bleuTOL}{\texttt{True Prob.}} & Simulated true prob. & No recalibration technique \\
        \fcolorbox{black}{vertTOL}{\rule{0pt}{5pt}\rule{5pt}{0pt}}\quad\textcolor{vertTOL}{\texttt{No Calibration}} & Transformed prob. & No recalibration technique \\
        \fcolorbox{black}{vertClairTOL}{\rule{0pt}{5pt}\rule{5pt}{0pt}}\quad\textcolor{vertClairTOL}{\texttt{Platt Scaling}} & Transformed prob. & Platt Scaling \\
        \fcolorbox{black}{bleuClairTOL}{\rule{0pt}{5pt}\rule{5pt}{0pt}}\quad\textcolor{bleuClairTOL}{\texttt{Isotonic}} & Transformed prob. & Isotonic regression \\
        \fcolorbox{black}{sableTOL}{\rule{0pt}{5pt}\rule{5pt}{0pt}}\quad\textcolor{sableTOL}{\texttt{Beta}} & Transformed prob. & Beta calibration \\
        \fcolorbox{black}{parmeTOL}{\rule{0pt}{5pt}\rule{5pt}{0pt}}\quad\textcolor{parmeTOL}{\texttt{Locfit (deg = 0)}} & Transformed prob. & Local reg. with degree 0 \\
        \fcolorbox{black}{magentaTOL}{\rule{0pt}{5pt}\rule{5pt}{0pt}}\quad\textcolor{magentaTOL}{\texttt{Locfit (deg = 1)}} & Transformed prob. & Local reg. with degree 1 \\
        \fcolorbox{black}{roseTOL}{\rule{0pt}{5pt}\rule{5pt}{0pt}}\quad\textcolor{roseTOL}{\texttt{Locfit (deg = 2)}} & Transformed prob. & Local reg. with degree 2 \\
        \hline\hline
    \end{tabular}
    \label{tab:recalibration-techniques}
\end{table}

\begin{figure}[ht]%
\centering
\subfloat[True MSE.]{
    \includegraphics[width = \figwidth]{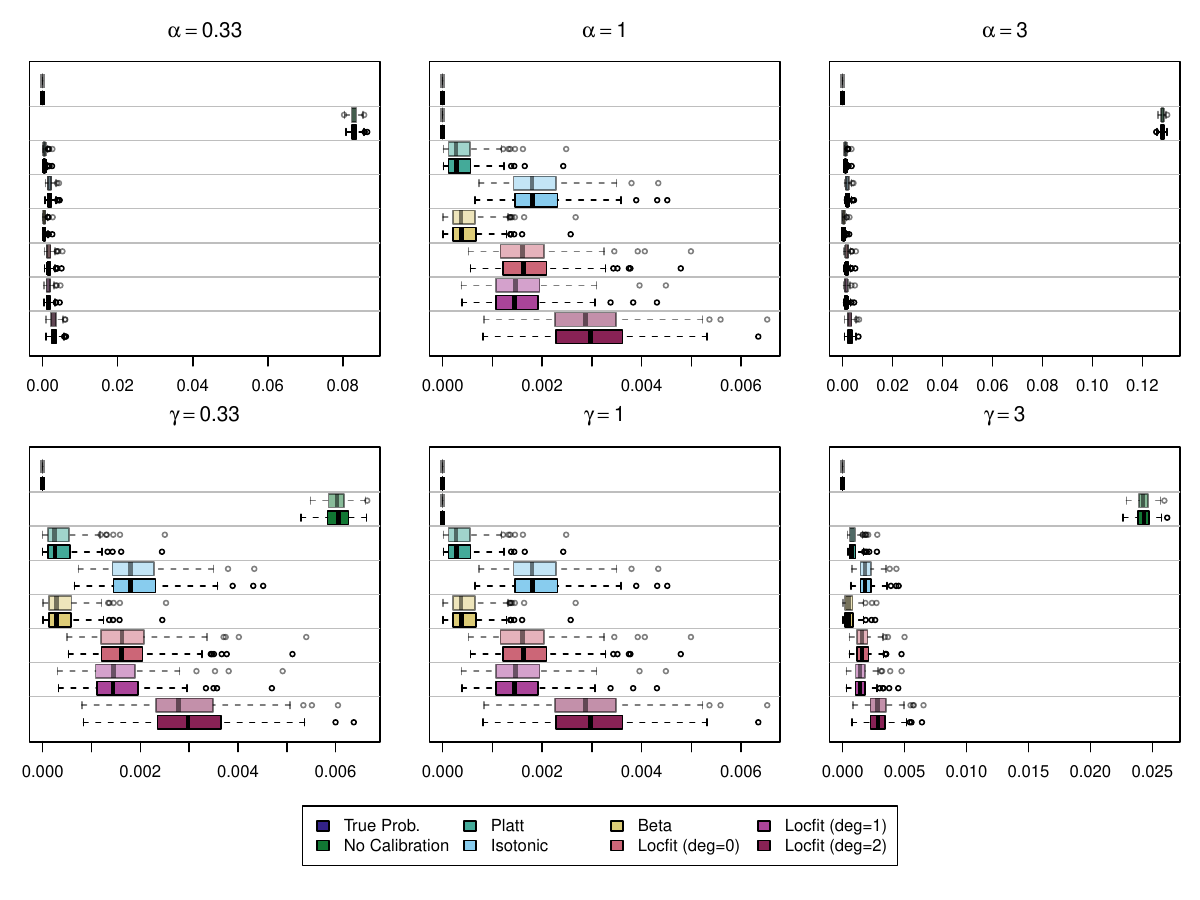}\label{fig-recalib-simuls-metrics-truemse}
}\qquad
\subfloat[Brier Score.]{
    \includegraphics[width = \figwidth]{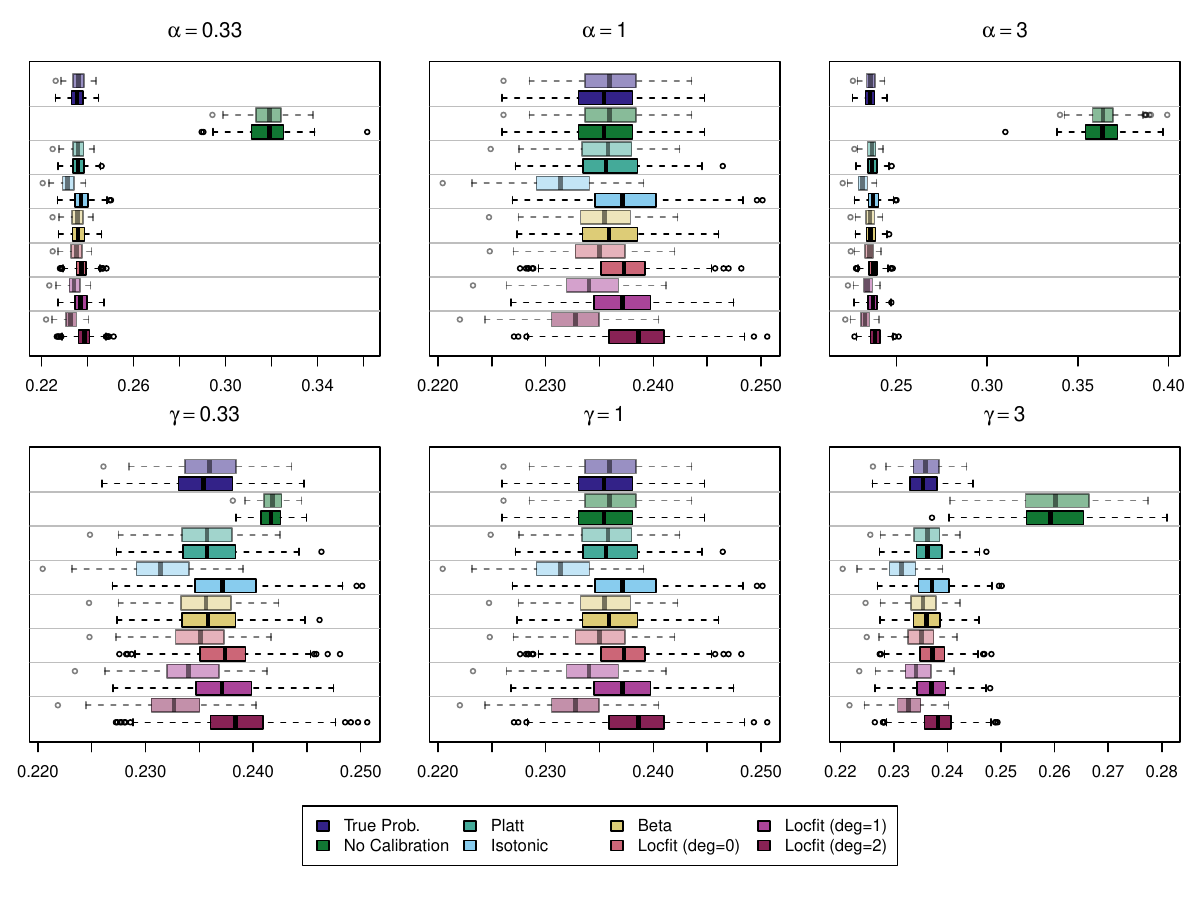}\label{fig-recalib-simuls-metrics-brier}
}\\
\subfloat[Expected Calibration Error.]{
    \includegraphics[width = \figwidth]{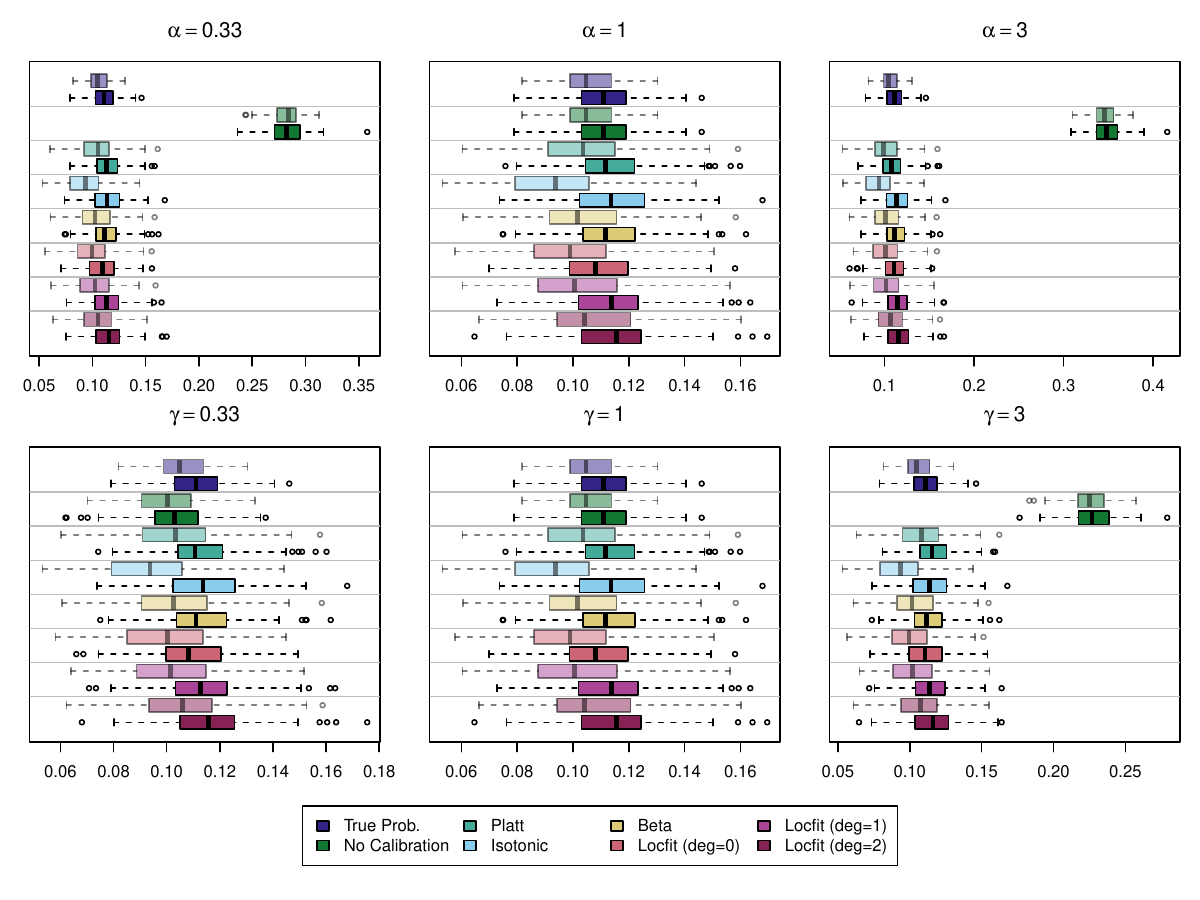}\label{fig-recalib-simuls-metrics-ece}
}\qquad
\subfloat[Local Calibration Score.]{
    \includegraphics[width = \figwidth]{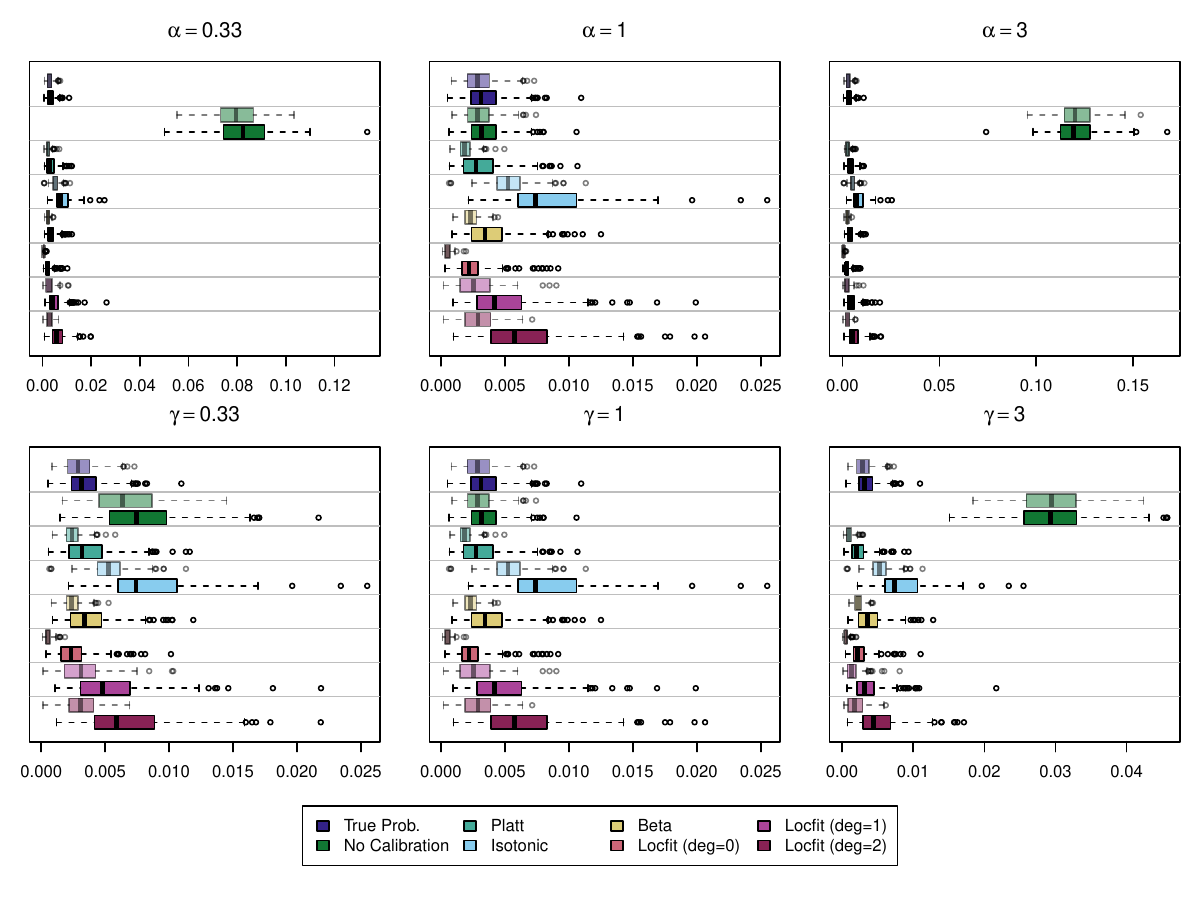}\label{fig-recalib-simuls-metrics-lcs}
}\\
\caption{Calibration Metrics on 200 Simulations for each Value of $\alpha$ (top) or $\gamma$ (bottom), on the Calibration (transparent colors) and on the Test Set (full colors). The metrics are computed for different definitions of the scores: using the \textcolor{bleuTOL}{true probabilities}, the \textcolor{vertTOL}{non calibrated scores}, or the recalibrated scores.}
\label{fig-recalib-simuls-calib-metrics}
\end{figure}

\subsection{Calibration Curves with Quantile Binning}

Figure~\ref{fig-calib-simuls-calib-curve-quant} illustrates, mirroring Figure~\ref{fig-calib-simuls-calib-curve-locfit} the overlay of 200 calibration curves computed using bins defined by quantiles, for each type of score distortion (varying $\alpha$ or $\gamma$). The distribution of true probabilities is depicted at the top of each graph of the figure.

\begin{figure}[ht]
\centering
\includegraphics[width = .6\linewidth]{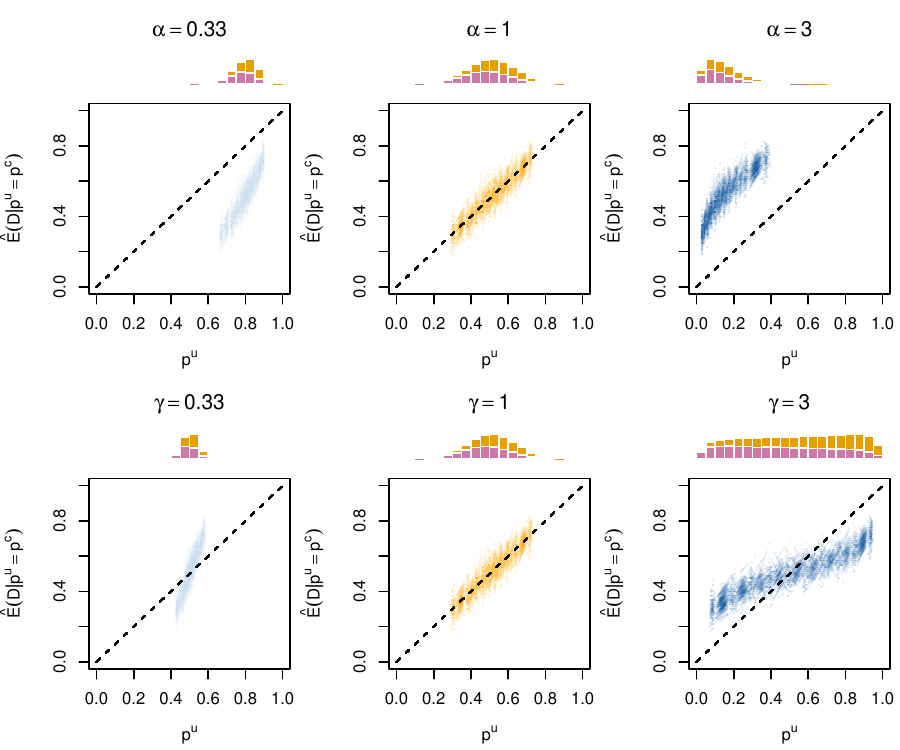}
\caption{Calibration Curves Defined with Bins, on 200 Simulations for each Value of $\alpha$ (top) or $\gamma$ (bottom).}\label{fig-calib-simuls-calib-curve-quant}
\end{figure}

The calibration curves computed after recalibrating the scores are calculated on the \textcolor{wongOrange}{calibration set (shown in orange)}, and on the \textcolor{wongGreen}{test set (shown in green)}, for each poor calibration scenario, where $\alpha$ or $\gamma$ vary. When $\alpha=1$ or $\gamma=1$, the transformed scores $p^u$ are in fact equal to the true probabilities $p$. We consider different scenarios. In Figure~\ref{fig-recalib-simuls-calib-curve-quant-true_prob}, the scores are the true probabilities $p$ and no recalibration technique is employed. In Figure~\ref{fig-recalib-simuls-calib-curve-quant-no_calib}, the scores are the uncalibrated values $p^u$ and no recalibration technique is employed either. Then, we consider recalibration techniques on these uncalibrated values: Platt scaling (Figure~\ref{fig-recalib-simuls-calib-curve-quant-platt}), isotonic regression (Figure~\ref{fig-recalib-simuls-calib-curve-quant-isotonic}), beta calibration (Figure~\ref{fig-recalib-simuls-calib-curve-quant-beta}), and local regression (Figure~\ref{fig-recalib-simuls-calib-curve-quant-locfit_0} for degree 0, Figure~\ref{fig-recalib-simuls-calib-curve-quant-locfit_1} for degree 1, and Figure~\ref{fig-recalib-simuls-calib-curve-quant-locfit_2} for degree 2).

\def\figwidth{.45\textwidth}

\begin{figure}[ht]%
\centering
\subfloat[\textcolor{bleuTOL}{True Probabilities}.]{
    \includegraphics[width = \figwidth]{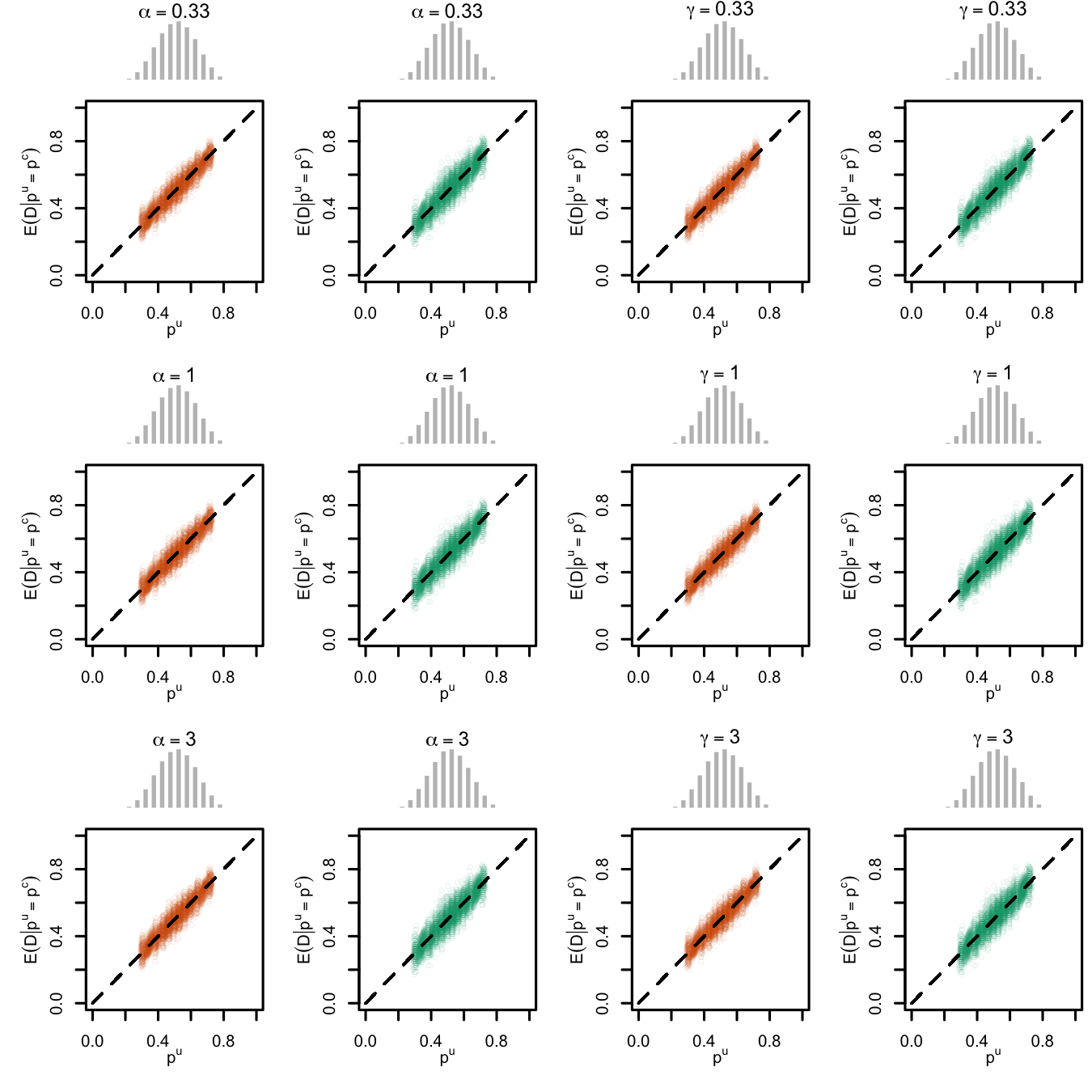}\label{fig-recalib-simuls-calib-curve-quant-true_prob}
}\qquad
\subfloat[\textcolor{vertTOL}{Uncalibrated Scores}.]{
    \includegraphics[width = \figwidth]{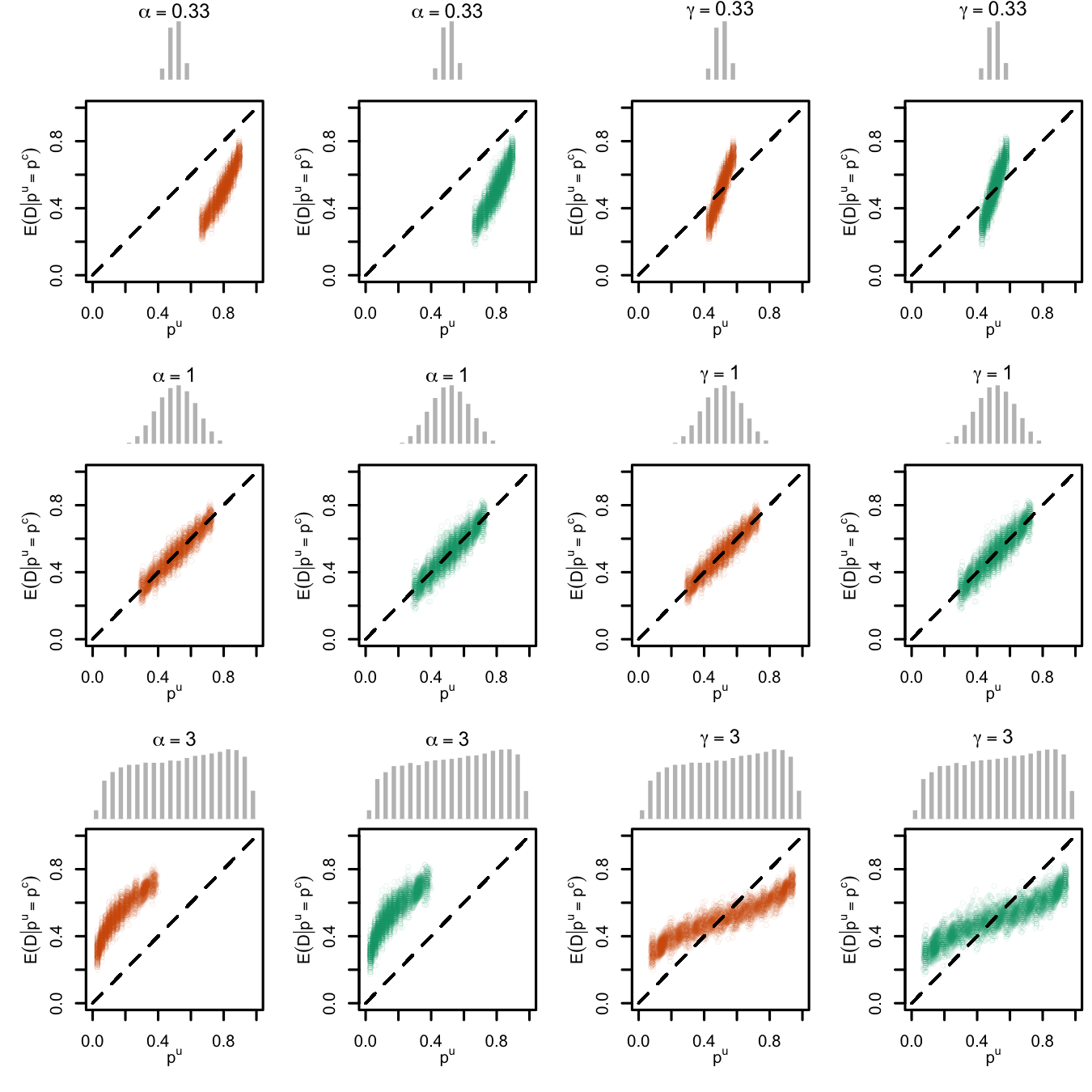}\label{fig-recalib-simuls-calib-curve-quant-no_calib}
}\\
\caption{Calibration Curves Calculated with \textcolor{bleuTOL}{\textbf{True Probabilities} (Panel a)} or with \textcolor{vertTOL}{\textbf{Uncalibrated Scores} (Panel B)} as the Scores. The curves are obtained with \textbf{quantile binning}, for the \textcolor{wongOrange}{calibration set (orange)} and for the \textcolor{wongGreen}{test set (green)} for varying values of $\alpha$ and $\gamma$. The curves of the 200 replications of the simulations are superimposed. The histogram on top of each graph show the distribution of the \textcolor{colUncalibrated}{true probabilities (Panel a) and of the uncalibrated scores (Panel b) (in gray)}.}
\label{fig-recalib-simuls-calib-curve-quant-true_prob-no-calib}
\end{figure}

\begin{figure}[ht]%
\centering
\subfloat[\textcolor{vertClairTOL}{Platt Scaling}.]{
    \includegraphics[width = \figwidth]{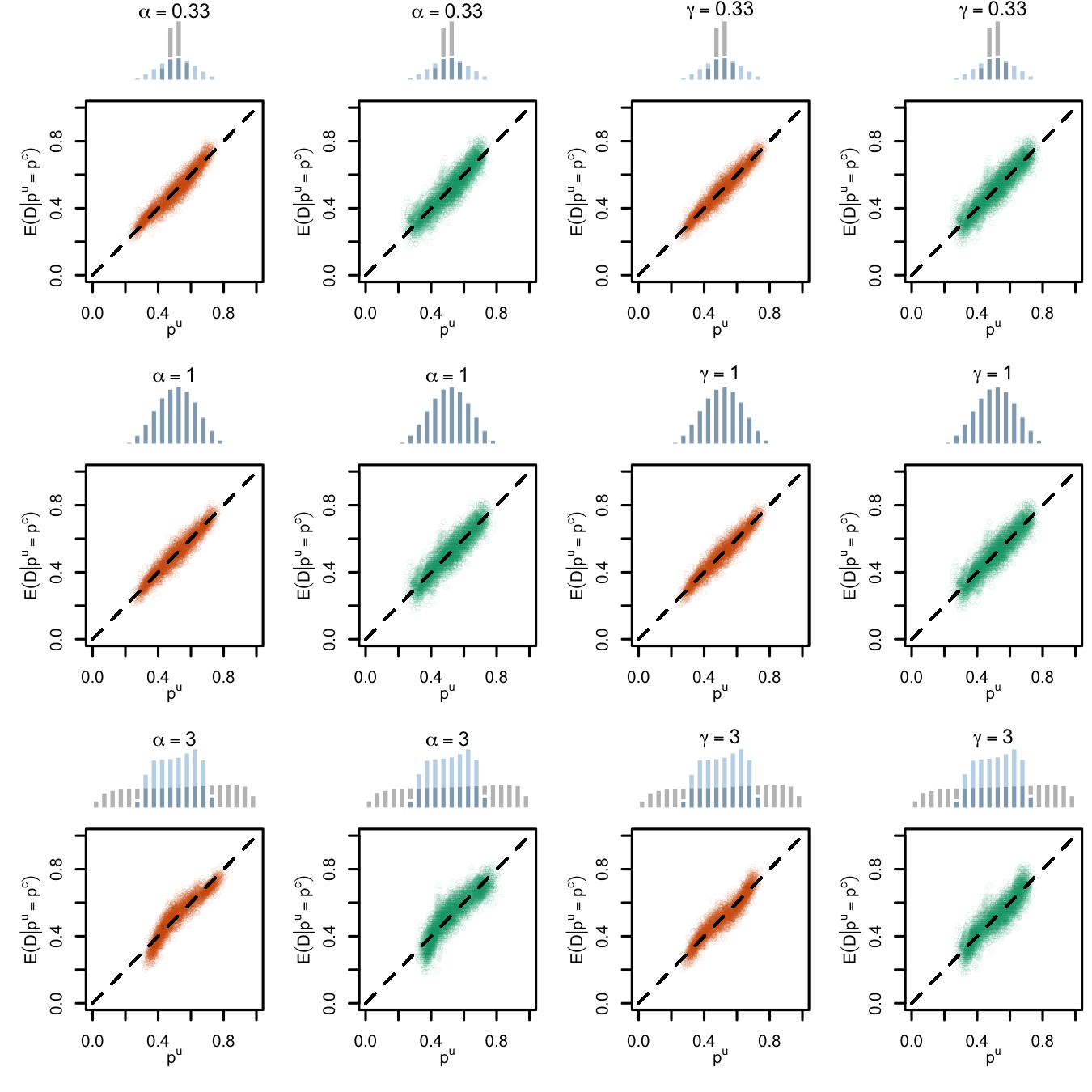}\label{fig-recalib-simuls-calib-curve-quant-platt}
}\qquad
\subfloat[\textcolor{bleuClairTOL}{Isotonic Regression}.]{
    \includegraphics[width = \figwidth]{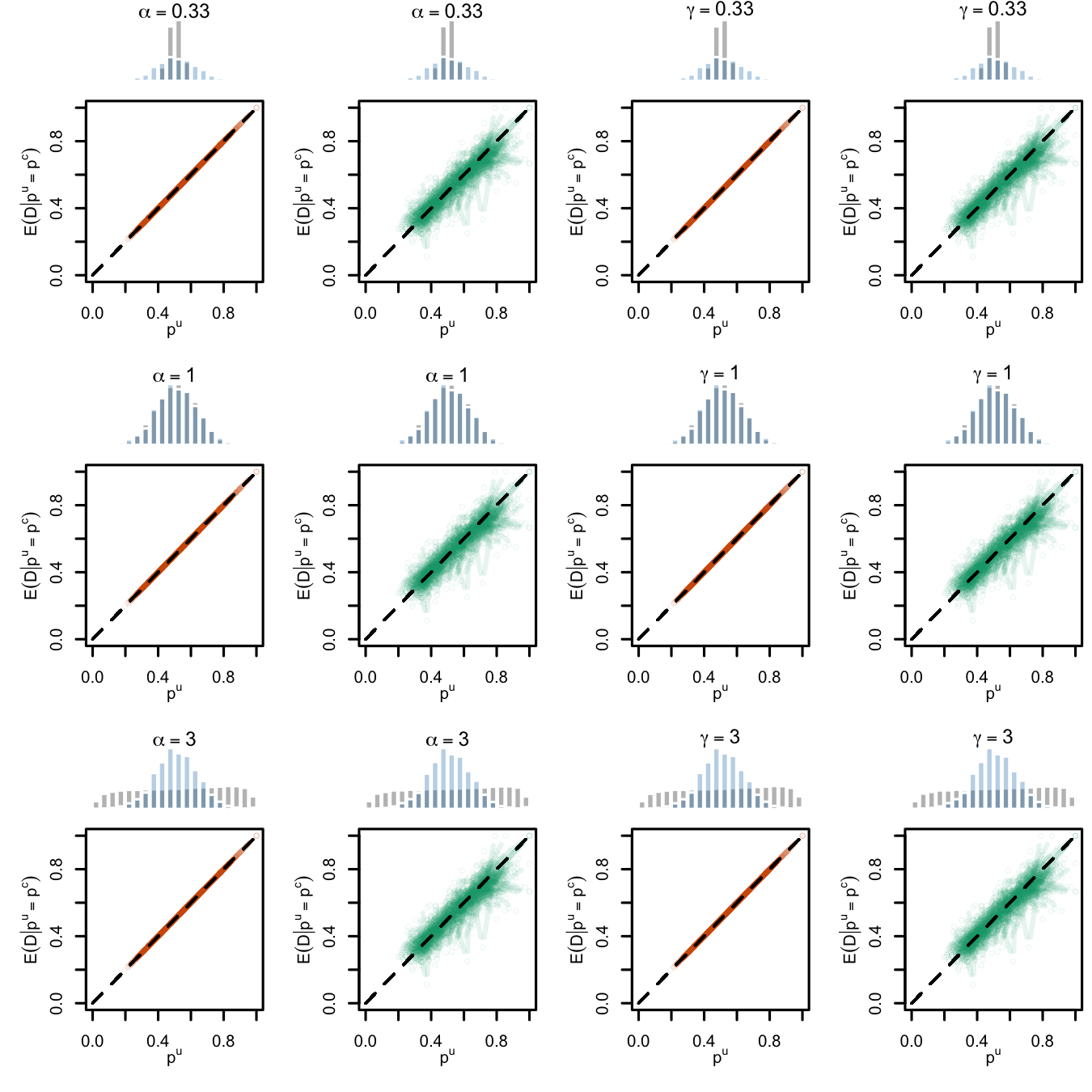}\label{fig-recalib-simuls-calib-curve-quant-isotonic}
}\\
\subfloat[\textcolor{sableTOL}{Beta Calibration}.]{
    \includegraphics[width = \figwidth]{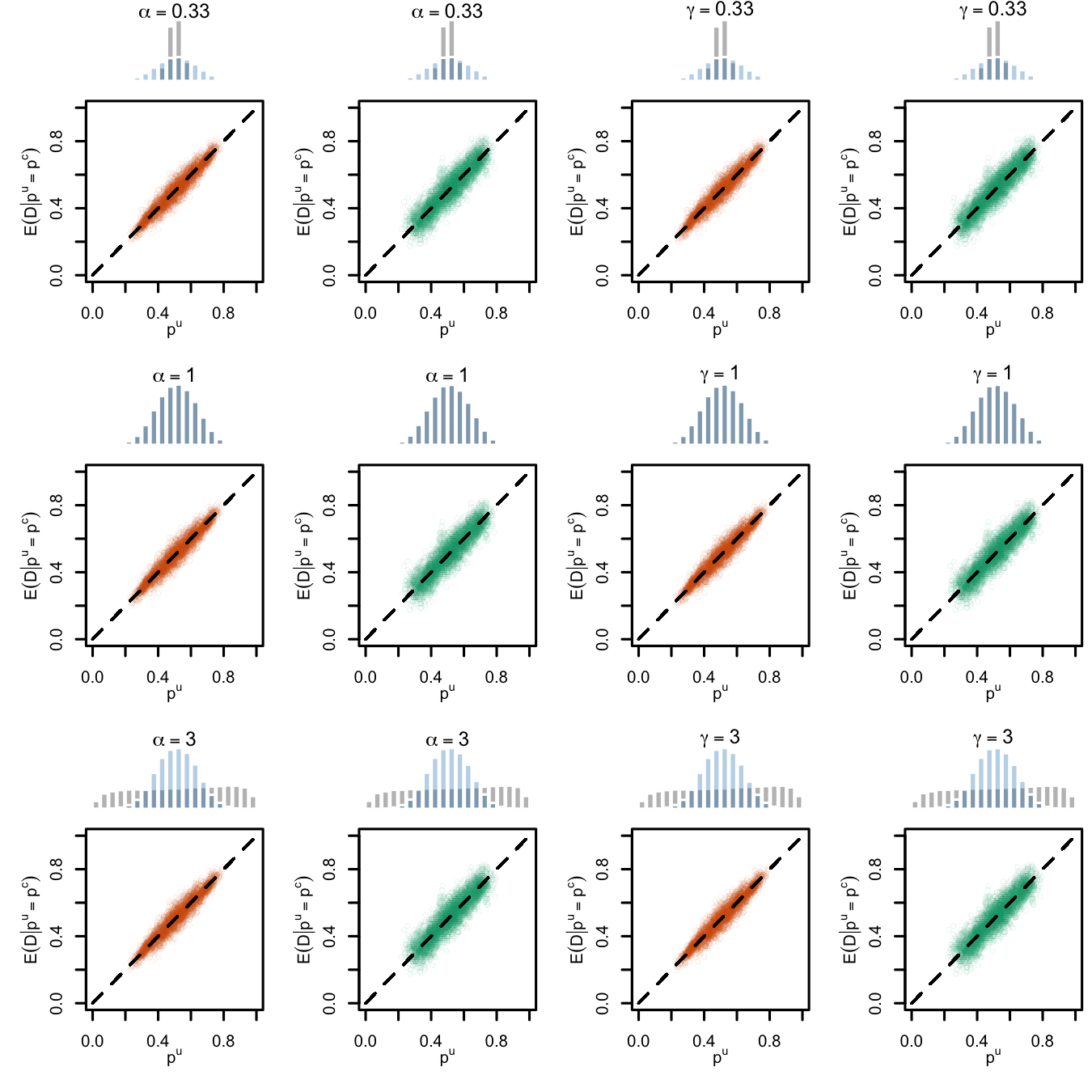}\label{fig-recalib-simuls-calib-curve-quant-beta}
}
\caption{Calibration Curves Calculated with \textbf{Scores Recalibrated} Using \textcolor{vertClairTOL}{\textbf{Platt Scaling (Panel a)}}, \textcolor{bleuClairTOL}{\textbf{Isotonic Regression (Panel b)}}, or \textcolor{sableTOL}{\textbf{Beta Calibration (Panel c)}}. The curves are obtained with \textbf{quantile binning}, for the \textcolor{wongOrange}{calibration set (orange)} and for the \textcolor{wongGreen}{test set (green)} for varying values of $\alpha$ and $\gamma$. The curves of the 200 replications of the simulations are superimposed. The histogram on top of each graph show the distribution of the \textcolor{colUncalibrated}{uncalibrated scores (gray)}, and that of the \textcolor{colRecalibrated}{calibrated scores (blue)}.}
\label{fig-recalib-simuls-calib-curve-quant-platt-isotonic}
\end{figure}

\begin{figure}[ht]%
\centering
\subfloat[\textcolor{parmeTOL}{Degree 0}.]{
    \includegraphics[width = \figwidth]{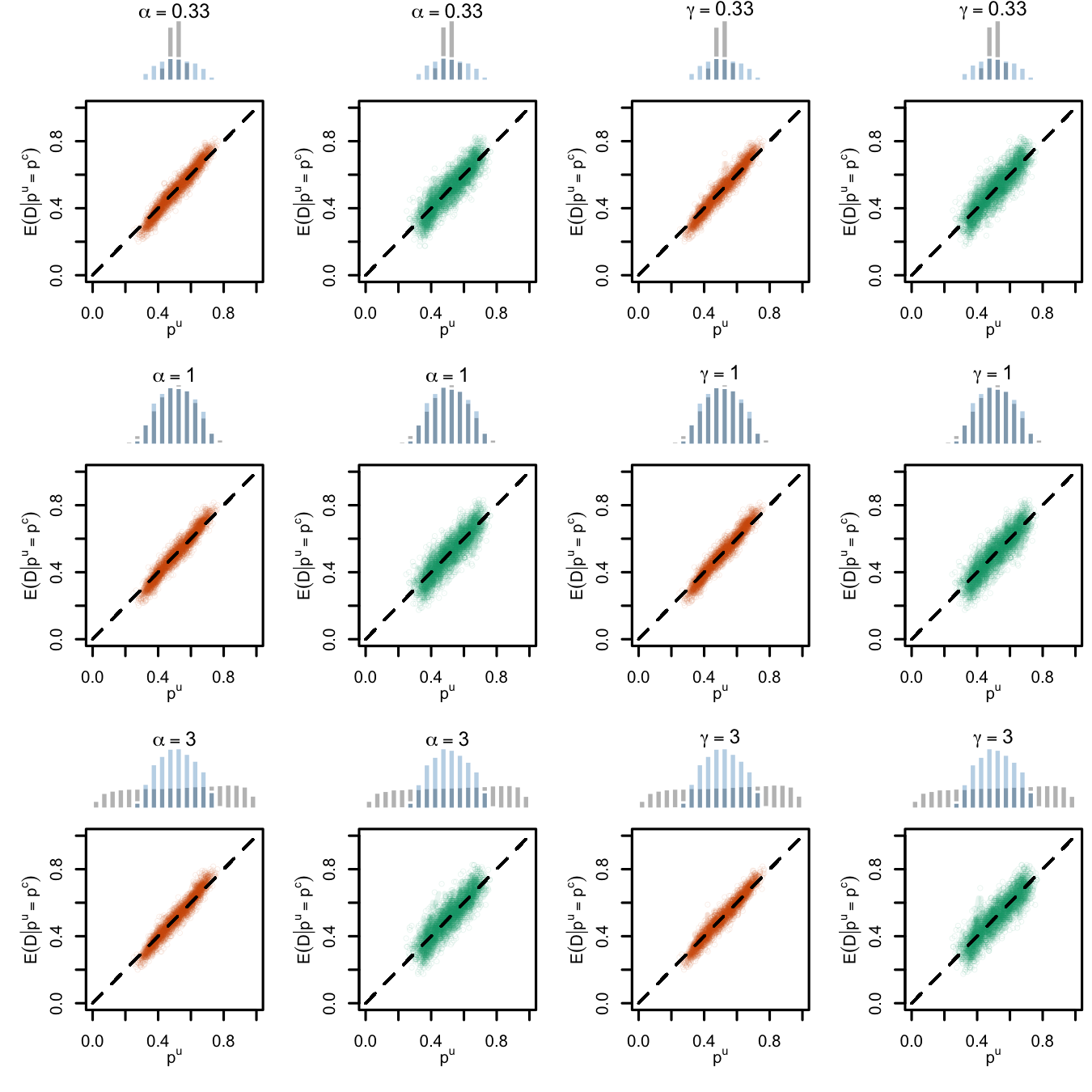}\label{fig-recalib-simuls-calib-curve-quant-locfit_0}
}\qquad
\subfloat[\textcolor{magentaTOL}{Degree 1}.]{
    \includegraphics[width = \figwidth]{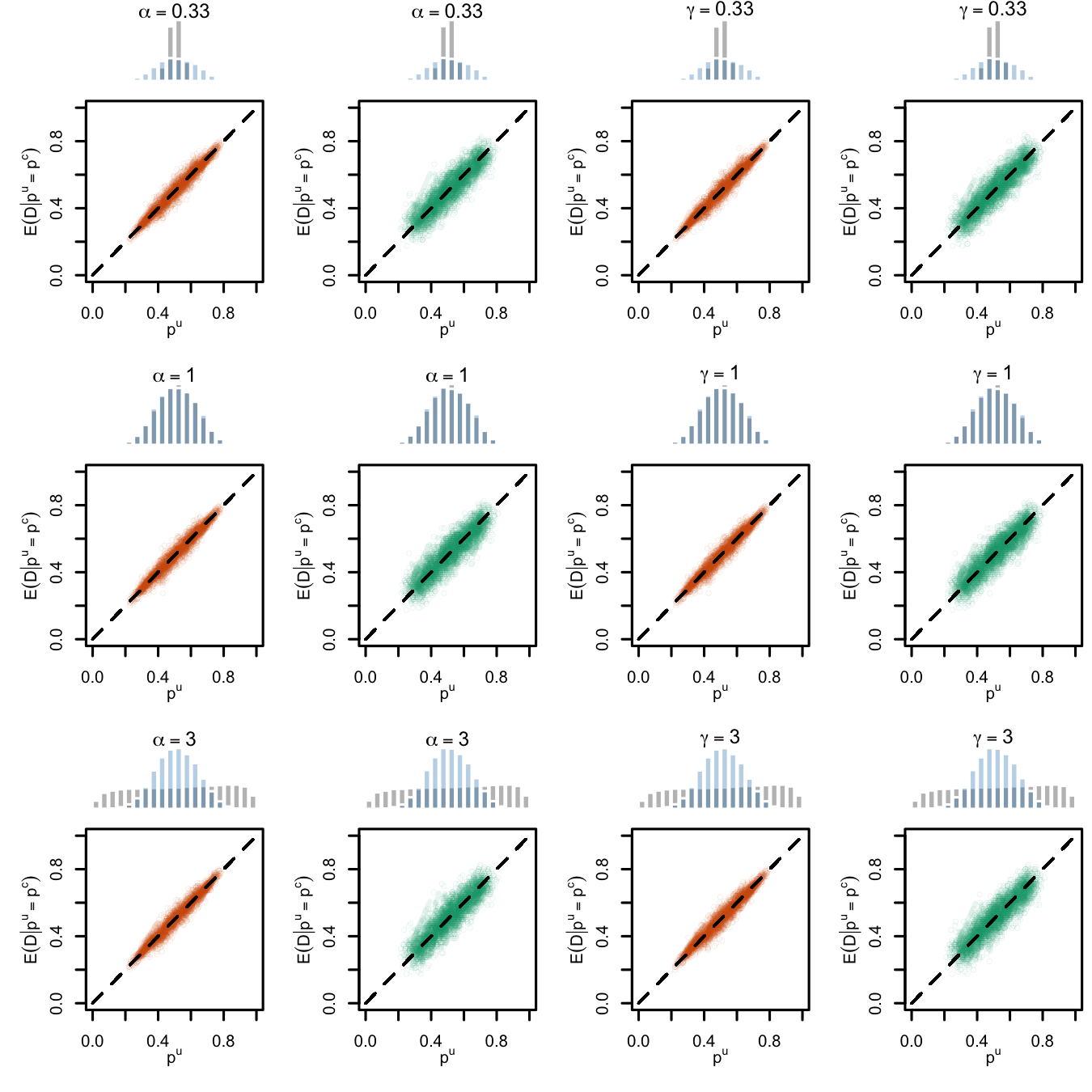}\label{fig-recalib-simuls-calib-curve-quant-locfit_1}
}\\
\subfloat[\textcolor{roseTOL}{Degree 2}.]{
    \includegraphics[width = \figwidth]{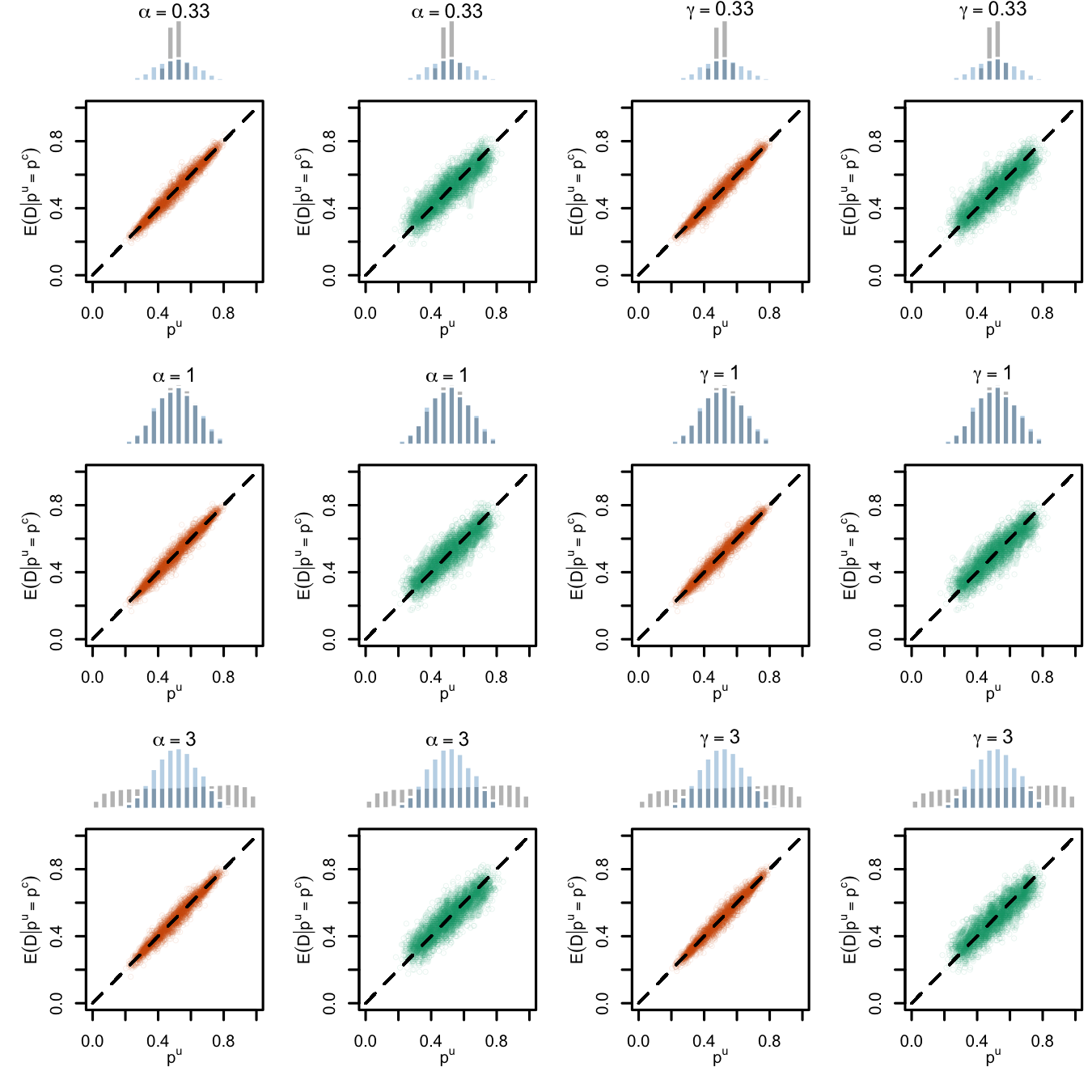}\label{fig-recalib-simuls-calib-curve-quant-locfit_2}
}
\caption{Calibration Curves Calculated with \textbf{Scores Recalibrated} Using \textbf{Local Regression} with \textcolor{parmeTOL}{Degree 0 (Panel a)}, \textcolor{magentaTOL}{Degree 1 (Panel b)}, or with \textcolor{roseTOL}{Degree 2 (Panel c)}. The curves are obtained with \textbf{quantile binning}, for the \textcolor{wongOrange}{calibration set (orange)} and for the \textcolor{wongGreen}{test set (green)} for varying values of $\alpha$ and $\gamma$. The curves of the 200 replications of the simulations are superimposed.The histogram on top of each graph show the distribution of the \textcolor{colUncalibrated}{uncalibrated scores}, and that of the \textcolor{colRecalibrated}{calibrated scores}.}
\label{fig-recalib-simuls-calib-curve-quant-locfit}
\end{figure}

\subsection{Calibration Curves with Local Regression}

In addition to the calibration curves computed using the quantile binning approach, we provide the calibration curves computed on the simulations using local regression on the \textcolor{wongOrange}{calibration set (shown in orange)} and on the \textcolor{wongGreen}{test set (shown in green)}, depending on the scores used to obtain the calibration curves: true probabilities (Figure~\ref{fig-recalib-simuls-calib-curve-locfit-true_prob}), uncalibrated transformed probabilities (Figure~\ref{fig-recalib-simuls-calib-curve-locfit-no_calib}), transformed probabilities recalibrated with Platt Scaling (Figure~\ref{fig-recalib-simuls-calib-curve-locfit-platt}), transformed probabilities recalibrated with isotonic regression (Figure~\ref{fig-recalib-simuls-calib-curve-locfit-isotonic}), transformed probabilities recalibrated with beta calibration (Figure~\ref{fig-recalib-simuls-calib-curve-locfit-beta}, and transformed probabilities recalibrated with local regression with degree 0 (Figure~\ref{fig-recalib-simuls-calib-curve-locfit-locfit-0}), with and with degree 2 (Figure~\ref{fig-recalib-simuls-calib-curve-locfit-locfit-2}).

\begin{figure}[ht]%
\centering
\subfloat[\textcolor{bleuTOL}{True Probabilities}.]{
    \includegraphics[width = \figwidth]{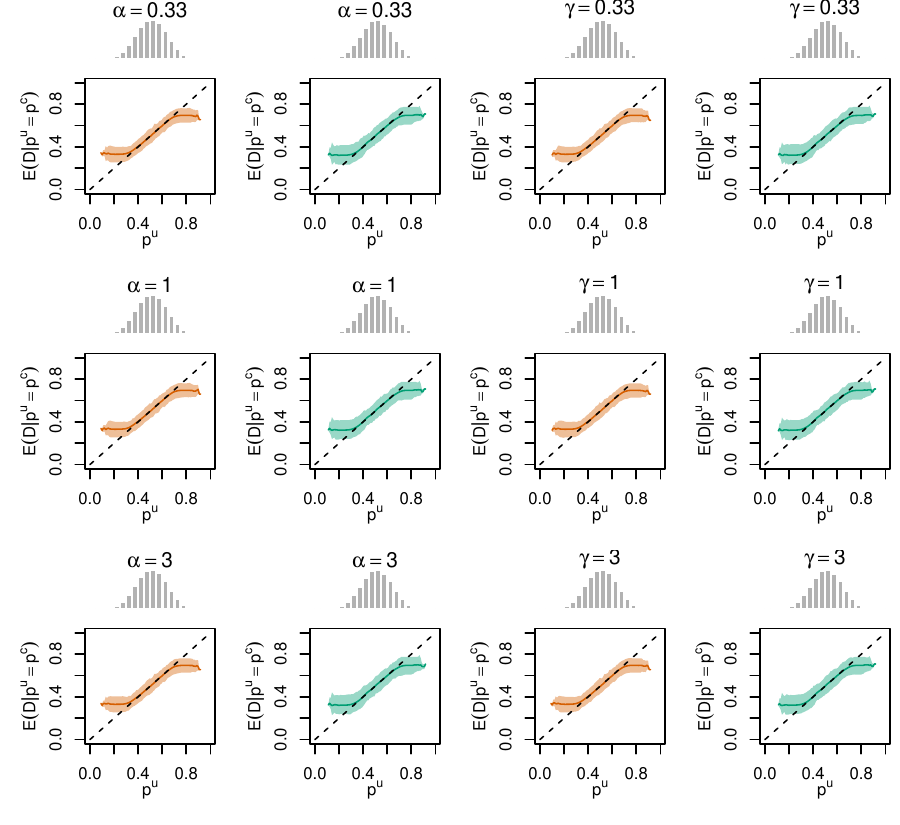}\label{fig-recalib-simuls-calib-curve-locfit-true_prob}
}\qquad
\subfloat[\textcolor{vertTOL}{Uncalibrated Scores}.]{
    \includegraphics[width = \figwidth]{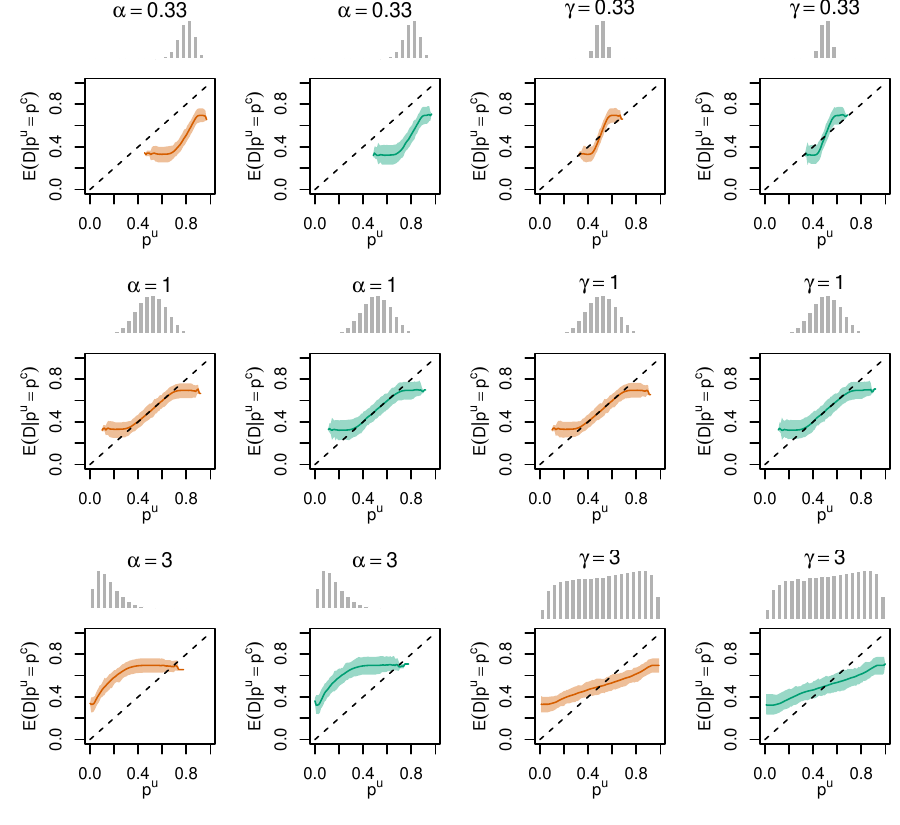}\label{fig-recalib-simuls-calib-curve-locfit-no_calib}
}\\
\caption{Calibration Curves Calculated with \textcolor{bleuTOL}{\textbf{True Probabilities} (Panel a)} or with \textcolor{vertTOL}{\textbf{Uncalibrated Scores} (Panel B)} as the Scores. The curves are obtained with \textbf{a local regression}, for the \textcolor{wongOrange}{calibration set (orange)} and for the \textcolor{wongGreen}{test set (green)} for varying values of $\alpha$ and $\gamma$. The curves are the average values obtained on 200 replications of the simulations, the bands correspond to 95\% bootstrap interval. The histogram on top of each graph show the distribution of the \textcolor{colUncalibrated}{true probabilities}.}
\label{fig-recalib-simuls-calib-curve-locfit-true_prob-no-calib}
\end{figure}

\begin{figure}[ht]%
\centering
\subfloat[\textcolor{vertClairTOL}{Platt Scaling}.]{
    \includegraphics[width = \figwidth]{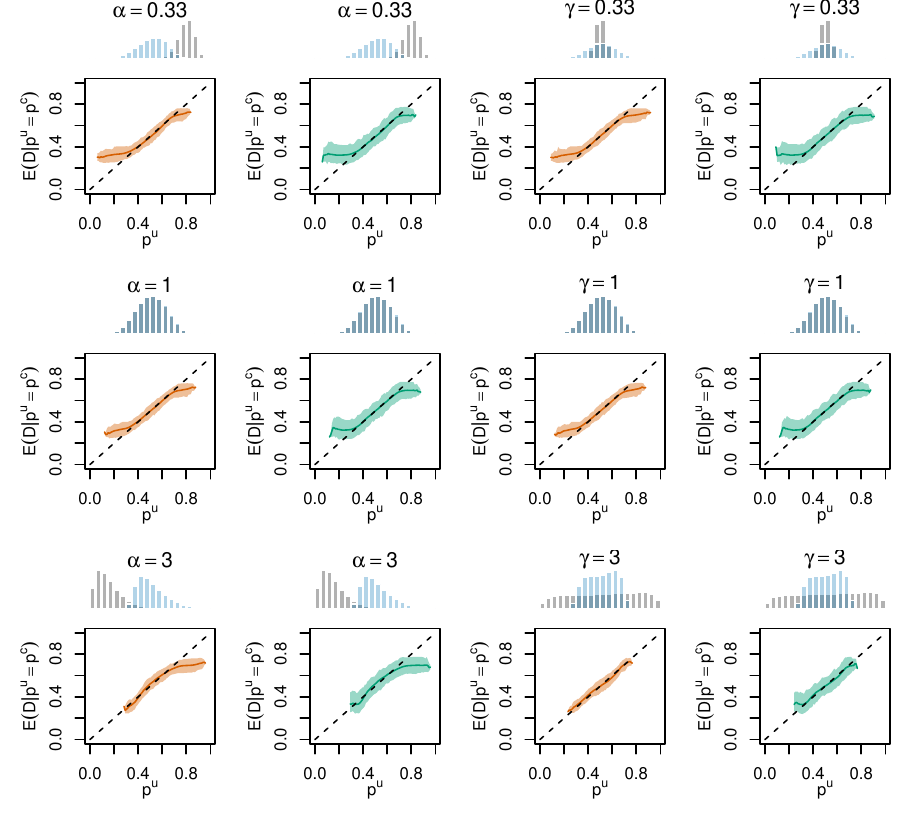}\label{fig-recalib-simuls-calib-curve-locfit-platt}
}\qquad
\subfloat[\textcolor{bleuClairTOL}{Isotonic Regression}.]{
    \includegraphics[width = \figwidth]{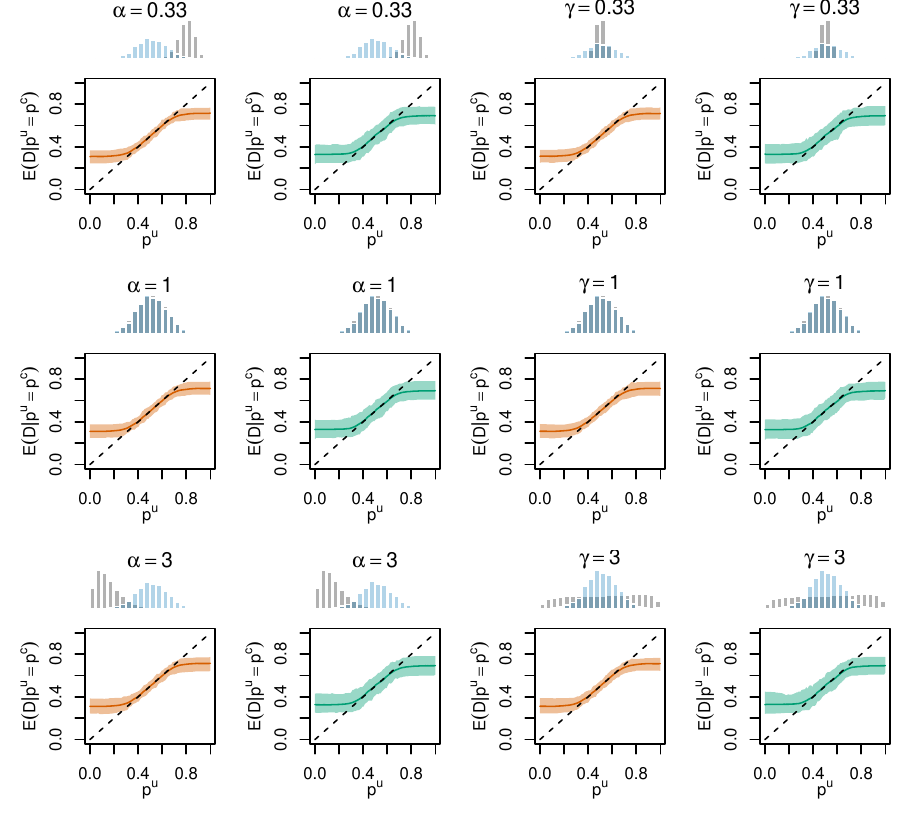}\label{fig-recalib-simuls-calib-curve-locfit-isotonic}
}\\
\subfloat[\textcolor{sableTOL}{Beta Calibration}.]{
    \includegraphics[width = \figwidth]{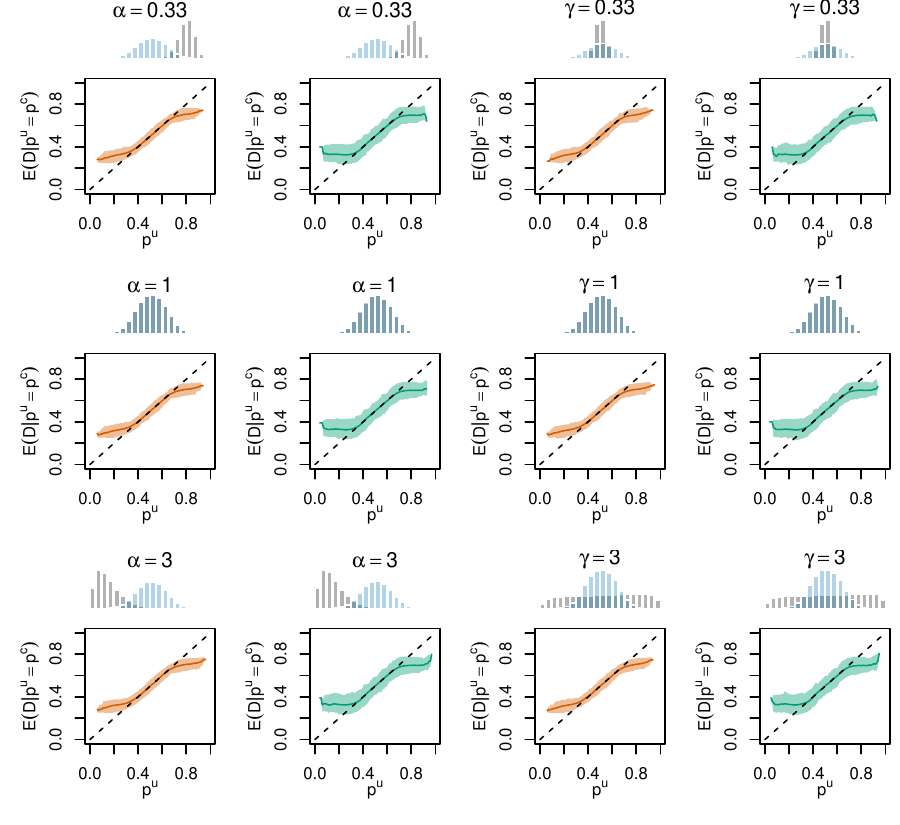}\label{fig-recalib-simuls-calib-curve-locfit-beta}
}
\caption{Calibration Curves Calculated with \textbf{Scores Recalibrated} Using \textcolor{vertClairTOL}{\textbf{Platt Scaling (Panel a)}}, \textcolor{bleuClairTOL}{\textbf{Isotonic Regression (Panel b)}}, or \textcolor{sableTOL}{\textbf{Beta Calibration (Panel c)}}. The curves are obtained with \textbf{local regression}, for the \textcolor{wongOrange}{calibration set (orange)} and for the \textcolor{wongGreen}{test set (green)} for varying values of $\alpha$ and $\gamma$. The curves are the average values obtained on 200 replications of the simulations, the bands correspond to 95\% bootstrap interval. The histogram on top of each graph show the distribution of the \textcolor{colUncalibrated}{uncalibrated scores}, and that of the \textcolor{colRecalibrated}{calibrated scores}.}
\label{fig-recalib-simuls-calib-curve-locfit-platt-isotonic-beta}
\end{figure}

\begin{figure}[ht]%
\centering
\subfloat[\textcolor{parmeTOL}{Degree 0}.]{
    \includegraphics[width = \figwidth]{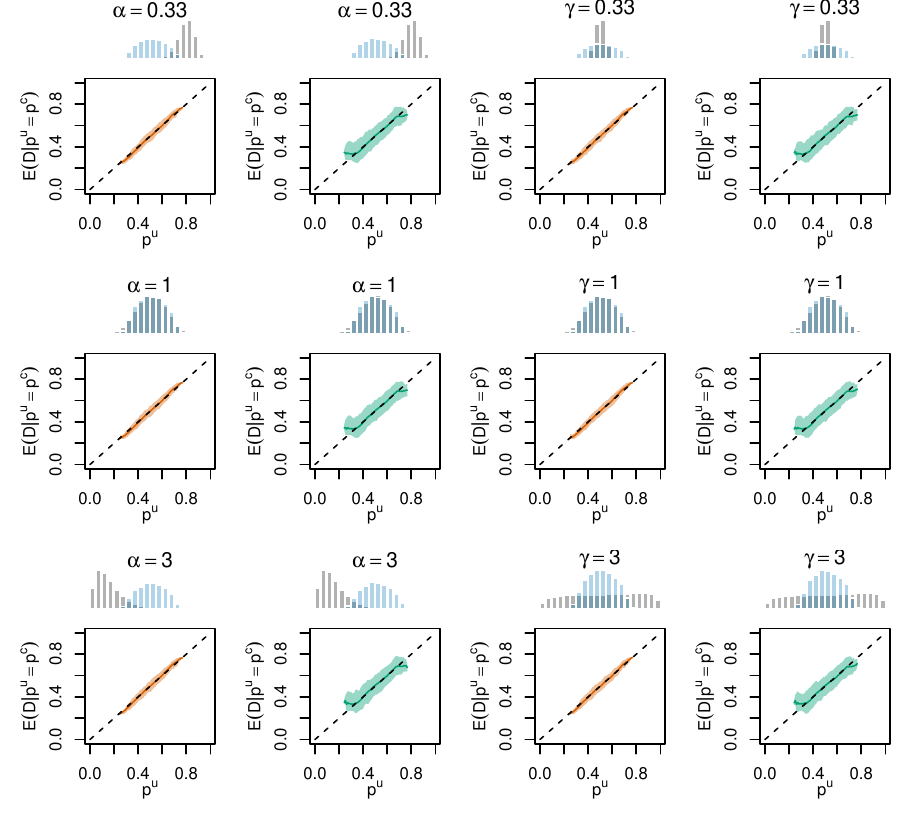}\label{fig-recalib-simuls-calib-curve-locfit-locfit-0}
}\qquad
\subfloat[\textcolor{magentaTOL}{Degree 1}.]{
    \includegraphics[width = \figwidth]{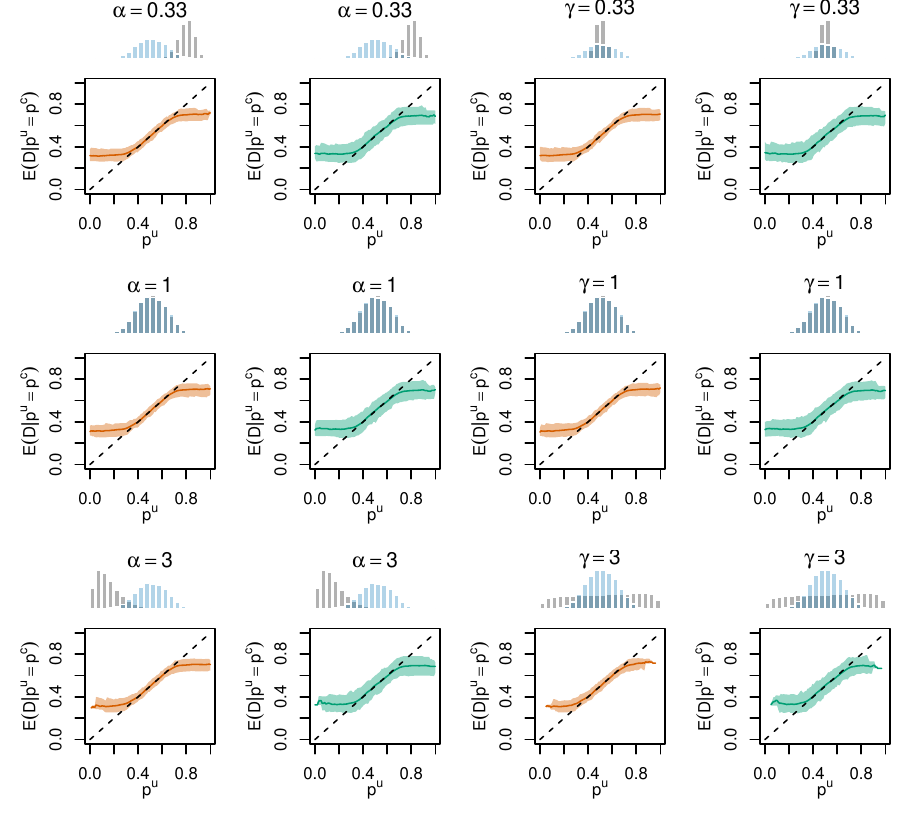}\label{fig-recalib-simuls-calib-curve-locfit-locfit-1}
}\\
\subfloat[\textcolor{roseTOL}{Degree 2}.]{
    \includegraphics[width = \figwidth]{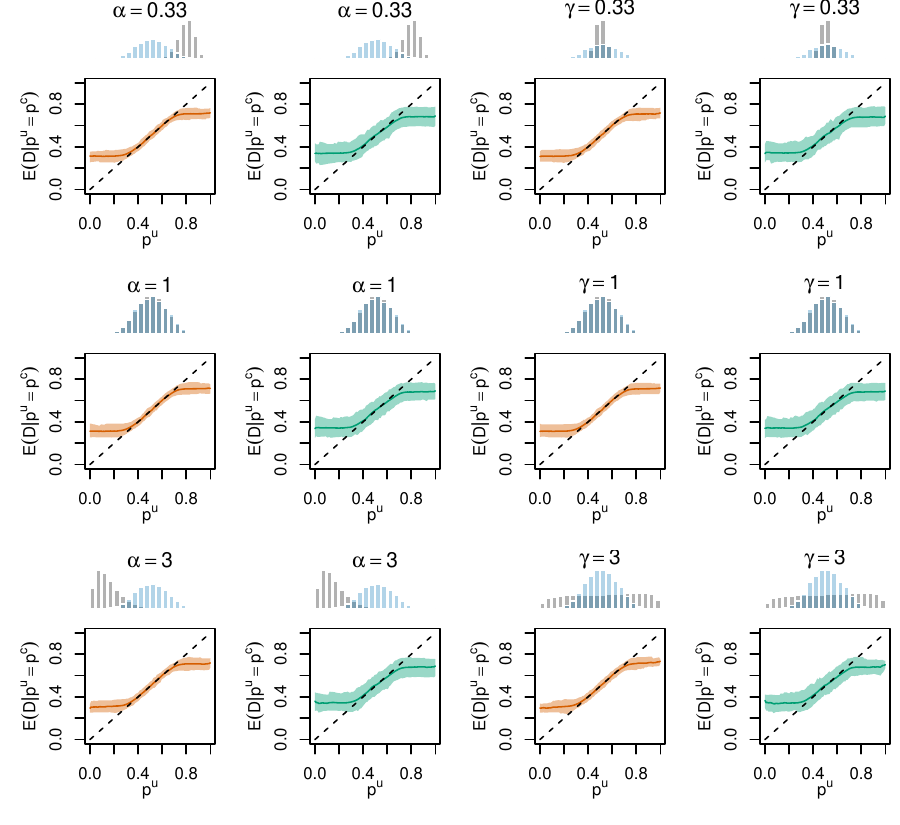}\label{fig-recalib-simuls-calib-curve-locfit-locfit-2}
}
\caption{Calibration Curves Calculated with \textbf{Scores Recalibrated} Using \textbf{Local Regression} with \textcolor{parmeTOL}{Degree 0 (Panel a)}, \textcolor{magentaTOL}{Degree 1 (Panel b)}, or with \textcolor{roseTOL}{Degree 2 (Panel c)}. The curves are obtained with \textbf{local regression}, for the \textcolor{wongOrange}{calibration set (orange)} and for the \textcolor{wongGreen}{test set (green)} for varying values of $\alpha$ and $\gamma$. The curves are the average values obtained on 200 replications of the simulations, the bands correspond to 95\% bootstrap intervals. The histogram on top of each graph show the distribution of the \textcolor{colUncalibrated}{uncalibrated scores}, and that of the \textcolor{colRecalibrated}{calibrated scores}.}
\label{fig-recalib-simuls-calib-curve-locfit-locfit}
\end{figure}

\clearpage
\section{Calibration of a Random Forest}\label{sec-appendix-rf}

We performed a grid search to select the best set of hyperparameters to train two types of forests on a train set with 50\% of the observations: regression forests, and classification forest. For each type of forest, we varied the following hyperparameters:
\begin{itemize}
    \item \texttt{ntree}, the number of trees: 100, 300, 500
    \item \texttt{mtry}, the number of variables to consider for a split: $1, 2, \ldots, 12$, where $12$ represents half the number of features
    \item \texttt{nodesize}, the minimum size in terminal nodes: 5, 10, 15, 20.
\end{itemize}

In total, this corresponds to training 144 regression forests and 144 classification forests. For the regression forest, we computed the out-of-bag MSE, whereas for the classification forest we computed the out-of-bag error rate (computed as the number of incorrectly classified observations over the total number of observations). The best set of hyperparameters was selected as the one which minimizes the out-of-bag criterion. For the regression forest, the best set of hyperparameters turned out to be \texttt{ntree=500}, \texttt{mtry=5}, and \texttt{nodesize=5}. For the classification forest, the best set was \texttt{ntree=300}, \texttt{mtry=5}, and \texttt{nodesize=5}.

After obtaining these hyperparameters, we split the remaining 50\% of observations into a \textcolor{wongOrange}{calibration} and a \textcolor{wongGreen}{test} sets of equal size. This split was performed randomly over 200 different replications. In each replication, we trained a recalibrator in the calibration set, using the predicted scores from the forest. As with the simulated data, we considered the following recalibration techniques: \textcolor{vertClairTOL}{Platt scaling}, \textcolor{bleuClairTOL}{Isotonic regression}, \textcolor{sableTOL}{Beta calibration}, and Local regression with varying degrees: \textcolor{parmeTOL}{degree 0}, \textcolor{magentaTOL}{degree 1}, and \textcolor{roseTOL}{degree 2}. We then computed goodness-of-fit metrics and calibration metrics on the \textcolor{wongOrange}{calibration} and on the \textcolor{wongGreen}{test} set, for each forest and each replication, using either the \textcolor{vertTOL}{uncalibrated scores $\hat{s}(\mathbf{x})$ (denoted as $p_u$)} or on the scores recalibrated with the different methods. 

\subsection{Metrics}

The different metrics are shown in Figure~\ref{recalib-rf-tuned-metrics-gof} 
for goodness-of-fit, and in Figure~\ref{recalib-rf-tuned-metrics-calib} for calibration. Both Figures complement Figure~\ref{fig-recalib-rf-tuned-auc-lsc} from the main text.

\begin{figure}[ht]
\centering
\includegraphics[width = .6\linewidth]{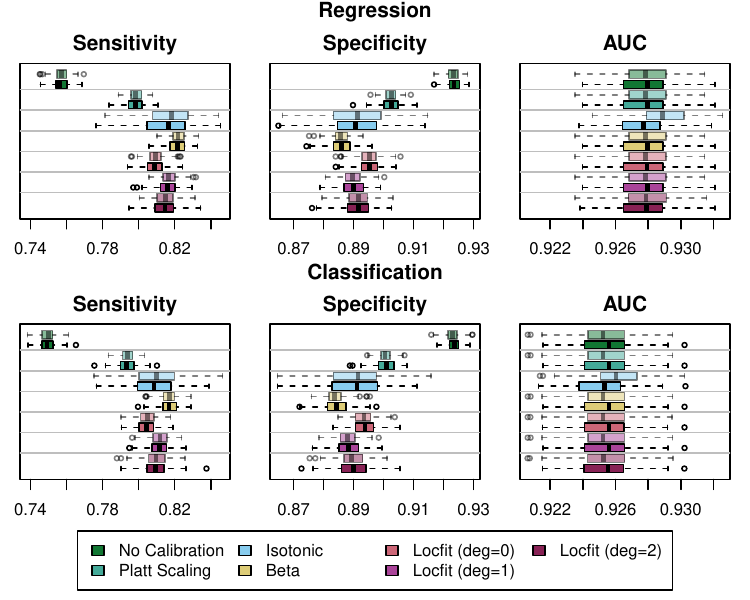}
\caption{Goodness-of-fit Metrics Computed on 200 Replications, for the Regression Random Forest (top) and for the Classification Random Forest (bottom), on the Calibration (transparent colors) and on the Test Set (full colors). A probability threshold of $\tau=0.5$ was used to compute the sensivity and the specificity.}\label{recalib-rf-tuned-metrics-gof}
\end{figure}

\begin{figure}[ht]
\centering
\includegraphics[width = .6\linewidth]{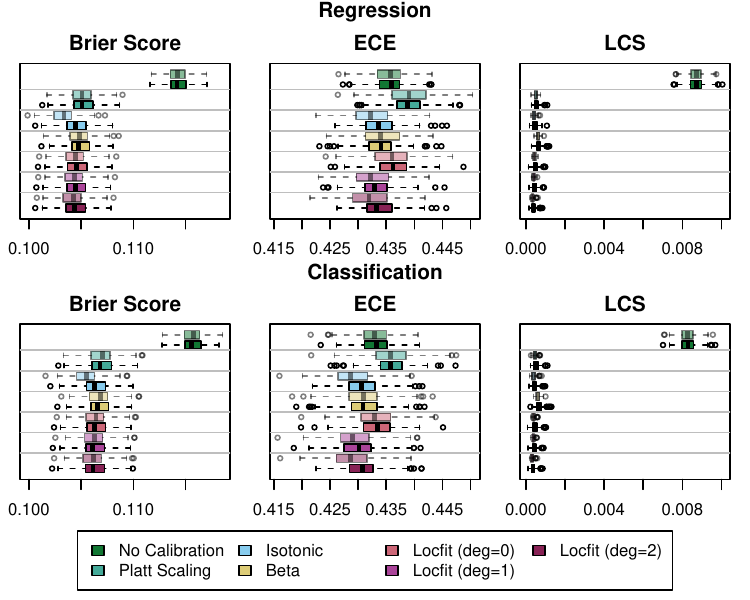}
\caption{Calibration Metrics Computed on 200 Replications, for the Regression Random Forest (top) and for the Classification Random Forest (bottom), on the Calibration (transparent colors) and on the Test Set (full colors).}\label{recalib-rf-tuned-metrics-calib}
\end{figure}

\begin{figure}[ht]%
\centering
\includegraphics[width = .6\textwidth]{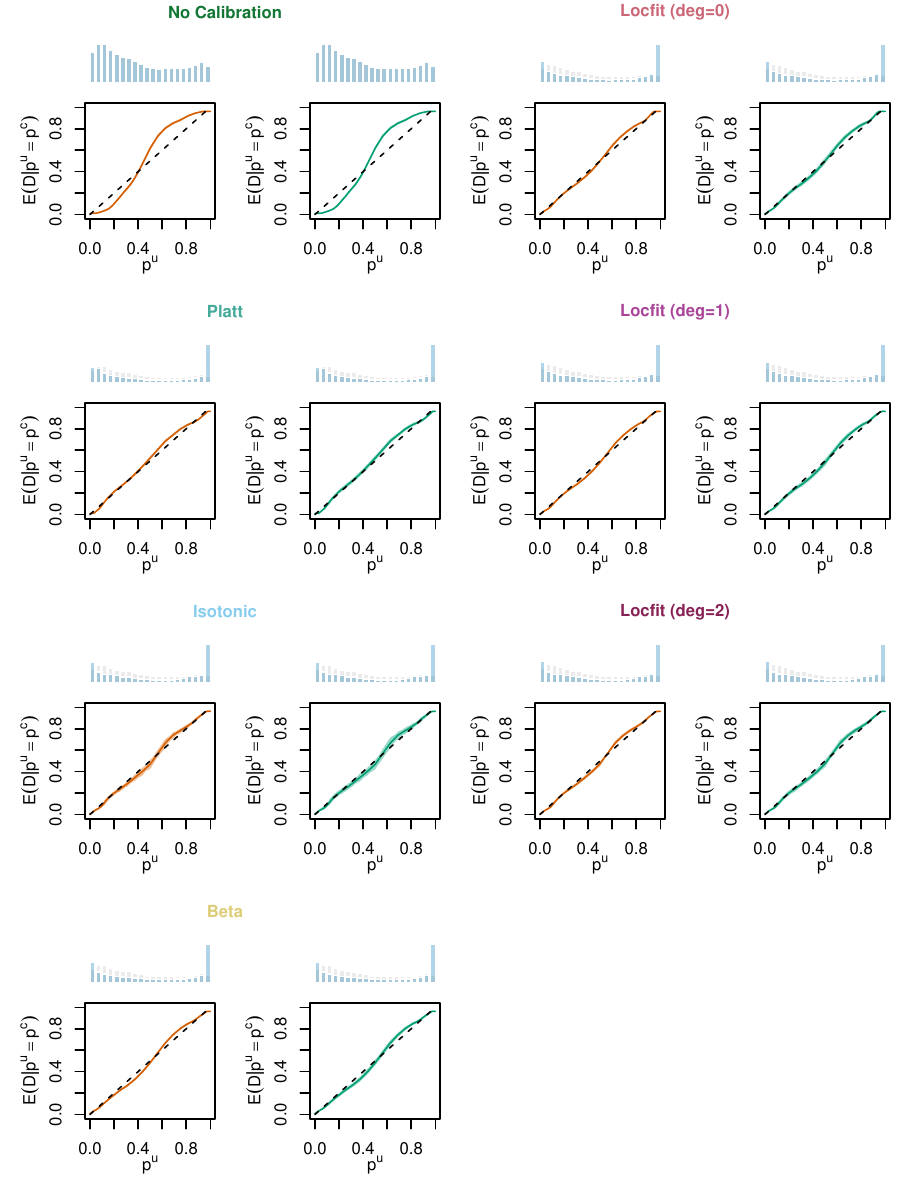}
\caption{Calibration Curves Obtained with \textbf{Local Regression}, for the \textbf{Regression} Random Forest, for the \textcolor{wongOrange}{Calibration Set} and for the \textcolor{wongGreen}{Test Set}. The curves are the averages values obtained on 200 different splits of the calibration and test datasets, and the color bands are the 95\% bootstrap confidence intervals. The histogram on top of each graph show the distribution of the \textcolor{colUncalibrated}{uncalibrated scores}, and that of the \textcolor{colRecalibrated}{calibrated scores}.}
\label{fig-recalib-rf-locfit-curves-reg}
\end{figure}

\begin{figure}[ht]%
\centering
\includegraphics[width = .6\textwidth]{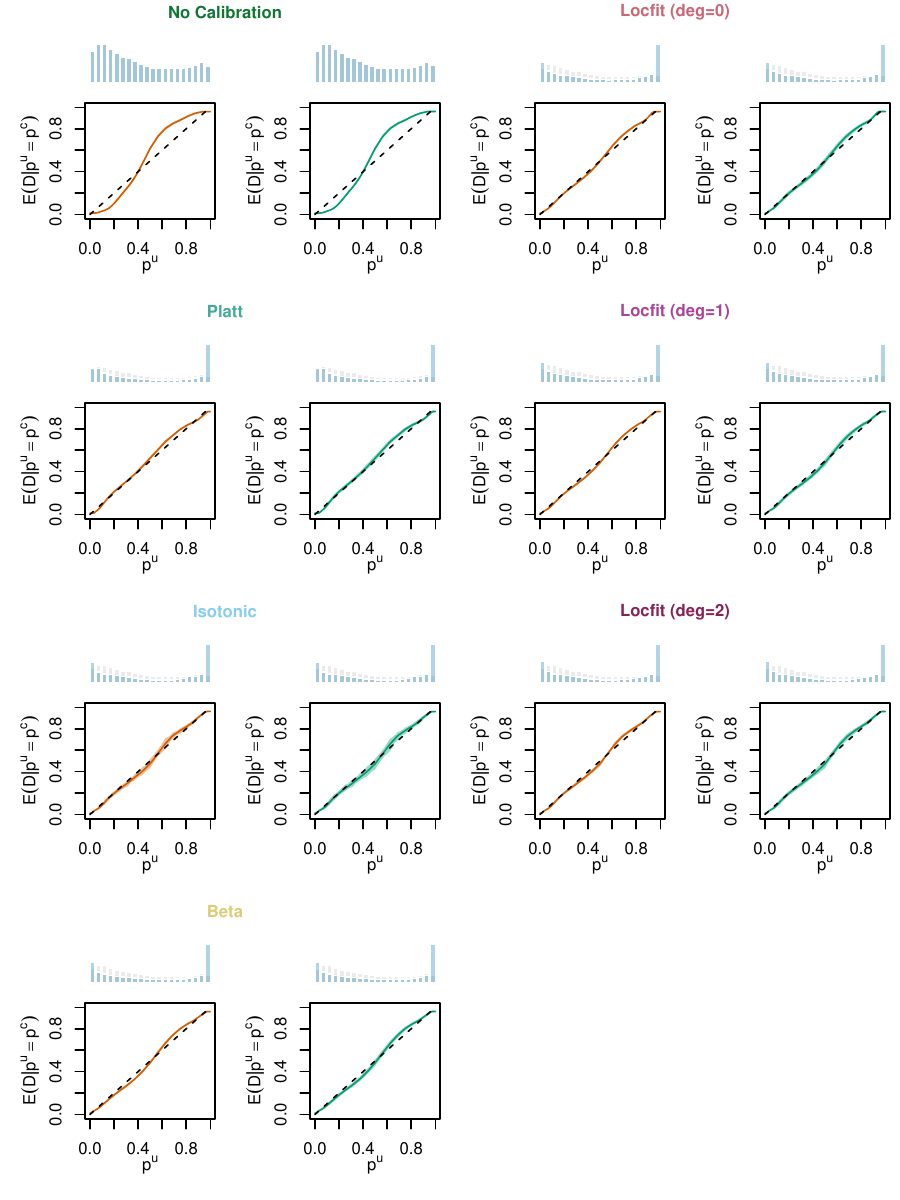}
\caption{Calibration Curves Obtained with \textbf{Local Regression}, for the \textbf{Classification} Random Forest, for the \textcolor{wongOrange}{Calibration Set} and for the \textcolor{wongGreen}{Test Set}. The curves are the averages values obtained on 200 different splits of the calibration and test datasets, and the color bands are the 95\% bootstrap confidence intervals. The histogram on top of each graph show the distribution of the \textcolor{colUncalibrated}{uncalibrated scores}, and that of the \textcolor{colRecalibrated}{calibrated scores}.}
\label{fig-recalib-rf-locfit-curves-classif}
\end{figure}

\end{document}